\newcommand\scalemath[2]{\scalebox{#1}{\mbox{\ensuremath{\displaystyle #2}}}}
\theoremstyle{definition}
\newtheorem{proposition}{Proposition}
\theoremstyle{definition}
\theoremstyle{lemma}
\newtheorem{lemma}{Lemma}
\theoremstyle{corollary}
\newtheorem{corollary}{Corollary}
\newcommand{\mymodel}{\textsc{RelGCL}}
\newtheorem{theorem}{Theorem}
\title{Rethinking Graph Contrastive Learning through Relative Similarity Preservation}
\author{
Zhiyuan Ning$^{1,2}$
\and
Pengfei Wang$^{1,2}$
\and
Ziyue Qiao$^{4}$
\and
Pengyang Wang$^{5}$\thanks{Corresponding author.}
\and 
Yuanchun Zhou$^{1,2,3}$\\
\affiliations
$^1$Computer Network Information Center, Chinese Academy of Sciences\\
$^2$University of Chinese Academy of Sciences 
$^3$Hangzhou Institute for Advanced Study, UCAS\\
$^4$School of Computing and Information Technology, Great Bay University\\
$^5$Department of Computer and Information Science, IOTSC, University of Macau 
\emails
ningzhiyuan@cnic.cn,
pfwang@cnic.cn,
zyqiao@gbu.edu.cn,
pywang@um.edu.mo,
zyc@cnic.cn
}
\begin{document}

\maketitle

\begin{abstract}
Graph contrastive learning (GCL) has achieved remarkable success by following the computer vision paradigm of preserving absolute similarity between augmented views.
However, this approach faces fundamental challenges in graphs due to their discrete, non-Euclidean nature---view generation often breaks semantic validity and similarity verification becomes unreliable.
Through analyzing 11 real-world graphs, we discover a universal pattern transcending the homophily-heterophily dichotomy: label consistency systematically diminishes as structural distance increases, manifesting as smooth decay in homophily graphs and oscillatory decay in heterophily graphs.
We establish theoretical guarantees for this pattern through random walk theory, proving label distribution convergence and characterizing the mechanisms behind different decay behaviors.
This discovery reveals that graphs naturally encode relative similarity patterns, where structurally closer nodes exhibit collectively stronger semantic relationships.
Leveraging this insight, we propose \mymodel{}, a novel GCL framework with complementary pairwise and listwise implementations that preserve these inherent patterns through collective similarity objectives.
Extensive experiments demonstrate that our method consistently outperforms 20 existing approaches across both homophily and heterophily graphs, validating the effectiveness of leveraging natural relative similarity over artificial absolute similarity.
\end{abstract}
\section{Introduction}\label{sec:intro}
\looseness=-1
Graph contrastive learning (GCL) has emerged as a powerful approach for graph self-supervised learning, demonstrating strong performance across node classification, clustering~\cite{ning2025deep,xu2024sccdcg}, and similarity search tasks~\cite{Velickovic2019DeepGI,Zhu2020DeepGC,thakoor2021large}. 
Following the success in computer vision~\cite{he2020momentum,chen2020simple}, these methods typically generate multiple views of the same graph through augmentation techniques, aiming to maximize the agreement between different views of the same instance.
This approach implicitly assumes that different views should maintain absolute similarity in the embedding space---an assumption that has proved powerful for visual representations~\cite{chen2021exploring} (as shown in Figure~\ref{fig:example} (a)).
However, graphs fundamentally differ from images in their discrete, non-Euclidean nature~\cite{thakoor2021large}. 
This fundamental difference creates two critical challenges. 
First, view generation often breaks semantic validity (as shown in Figure~\ref{fig:example} (b))---while rotating an image preserves its meaning, removing an edge from a molecular graph could yield an entirely different chemical compound with distinct properties~\cite{Lee2022AugmentationFreeSL}. 
Second, and more fundamentally, similarity verification becomes unreliable (as shown in Figure~\ref{fig:example} (c))---while humans can easily verify if two image views represent the same object, judging similarity between graph views often exceeds human intuition~\cite{Hou2022GraphMAESM}, especially for abstract graphs representing complex relationships.

\begin{figure}[!t]
\centering
\vspace{-2mm}
\includegraphics[width=0.90\linewidth]{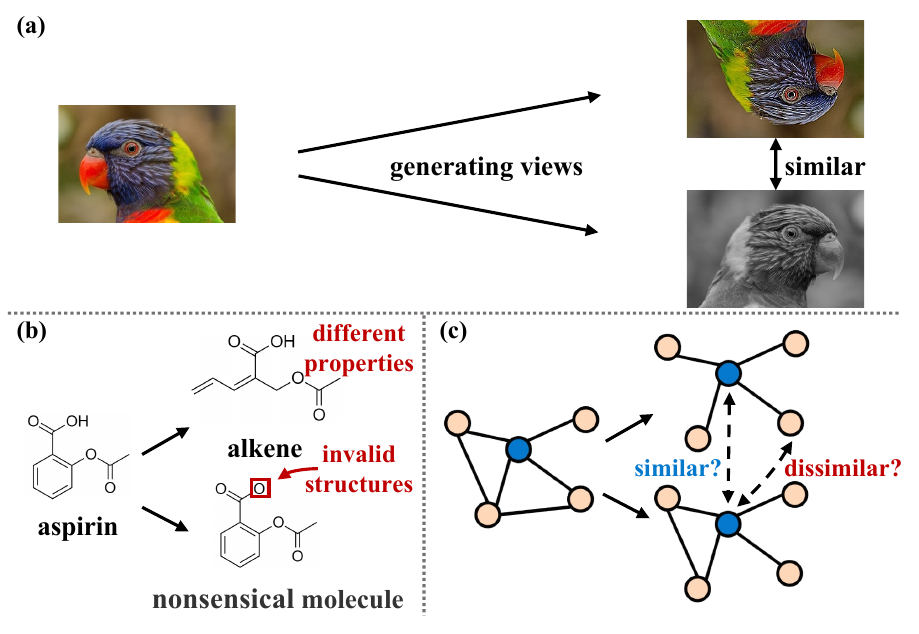}
\vspace{-4mm}
\caption{
Visual vs. Graph contrastive learning: (a) image views preserve semantics, (b) graph augmentation may alter properties, and (c) graph view similarity is hard to assess.
}
\label{fig:example}
\vspace{-6mm}
\end{figure}

\looseness=-1
These challenges suggest that enforcing absolute similarity through artificial views fundamentally misaligns with the nature of graph data. Rather than imposing external similarity constraints, a more promising direction would be to understand and leverage the inherent structural patterns that naturally exist in graphs~\cite{bramoulle2012homophily,zhu2020beyond}. 
This leads us to a more fundamental question: 
\textit{\textbf{how is similarity inherently encoded in graph structures themselves?}}
While traditional similarity analysis in graphs focuses on immediate neighborhood relationships---characterizing graphs as either homophily graphs~\cite{mcpherson2001birds} where similar nodes connect, or heterophily graphs~\cite{Pei2020GeomGCNGG} where dissimilar nodes connect---such \textit{\textbf{local}} characterization~\cite{ning2021lightcake,qiao2020context} fails to capture \textit{\textbf{broader}} similarity patterns that could provide richer signals for representation learning.

To obtain this broader perspective, we examine how node labels distribute across multi-hop neighborhoods in 11 real-world graph datasets: 8 homophily graphs and 3 heterophily graphs (as shown in Figure~\ref{fig:motivation}). 
Through quantifying semantic relationships via ``label consistency"---the average proportion of same-labeled nodes at each structural distance---we discover an unexpected universal pattern: 
despite their distinct local connectivity patterns, 
\textit{\textbf{both types of graphs show systematic diminishing of label consistency as structural distance increases.}}
While this decay manifests differently (\ul{smooth} in homophily graphs versus 
\uwave{oscillatory} in heterophily graphs), it reveals a fundamental principle: nodes that are structurally closer tend to share stronger semantic relationships collectively.
We rigorously validate this empirical observation through random walk theory~\cite{lovasz1993random,levin2017markov}, establishing the first theoretical guarantees on universal label distribution convergence and revealing the underlying mechanisms of distinct decay patterns in homophily versus heterophily graphs.
Our discovery fundamentally shifts how we define similarity in GCL---moving away from artificially imposed \textit{\textbf{absolute similarity}} (``whether two instances are similar") towards \textit{\textbf{relative similarity}} (``which instance is more similar to the anchor") inherent in structural proximity.
Based on these theoretical insights, we develop \mymodel{} (\textsc{Rel}ative \textsc{G}raph \textsc{C}ontrastive \textsc{L}earning), a novel GCL framework that leverages natural relative similarity patterns encoded in structural proximity.
Through carefully designed collective similarity objectives, we propose two complementary implementations of \mymodel{} (\mymodel{}\textsubscript{\textsc{Pair}} and \mymodel{}\textsubscript{\textsc{List}}) that preserve these inherent patterns from different perspectives.
Extensive experiments demonstrate state-of-the-art performance across 8 homophily and 3 heterophily graphs, consistently outperforming 20 existing approaches while showing strong generalization to large-scale graphs and diverse tasks. 

\begin{figure}[!t]
\centering
\vspace{-2mm}
\includegraphics[width=0.95\linewidth]{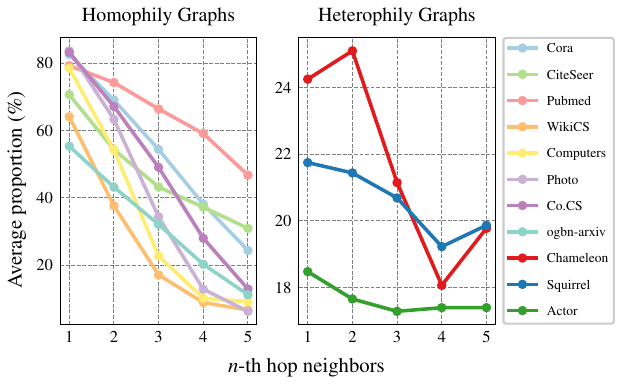}
\vspace{-4mm}
\caption{
Label consistency (the average proportion of neighbors sharing the same label as the anchor node) decay patterns: \ul{smooth monotonic decay} in homophily graphs versus \uwave{oscillatory decay} in heterophily graphs, both exhibiting \textbf{\textit{an overall diminishing trend}}.
}
\label{fig:motivation}
\vspace{-5mm}
\end{figure}

\section{Universal Decay Patterns in Graphs}
\label{sec:moti}

Having identified the limitations of imposing absolute similarity through artificial views, we now investigate how similarity is naturally encoded in graph structures. 
While traditional similarity analysis of graph focuses on local neighborhood patterns, our investigation extends to a broader neighborhood range, revealing a universal property that goes beyond local homophily-heterophily differences.

In this section, we first present empirical evidence for this universal pattern, followed by theoretical analysis that explains its underlying mechanisms.
Before the subsequent analysis, let's first define the basic notation.
Let $\mathcal{G}=\{\mathcal{V}, \mathcal{E}, \mathbf{X}, \mathbf{A}\}$ denote a graph with $N$ nodes, where $\mathcal{V}=\{v_i\}_{i=1}^N$ is the node set, $\mathcal{E} \subseteq \mathcal{V} \times \mathcal{V}$ is the edge set, $\mathbf{X}\in \mathbb{R}^{N \times D}$ is the node feature matrix, and $\mathbf{A} \in \mathbb{R}^{N \times N}$ is the adjacency matrix. For each node $v_i$, let $\mathcal{N}(v_i)^{[n]}$ denote its $n$-th hop neighbors, and $y_i$ denote the label of node $v_i$. When considering a $k$-hop neighborhood range, we denote the set of neighbors at different hops as $\mathbb{N}(v_i)^{[k]}=\{\mathcal{N}(v_i)^{[1]}, \mathcal{N}(v_i)^{[2]}, \dots, \mathcal{N}(v_i)^{[k]}, \mathcal{N}(v_i)^{[k+1]}\}$, where $\mathcal{N}(v_i)^{[k+1]}$ contains all nodes beyond $k$ hops.

\subsection{Empirical Evidence of Universal Decay}
\label{sec:empirical}  

We examine a collection of 11 real-world datasets: 8 homophily graphs and 3 heterophily graphs.
To quantify semantic relationships at different structural scales, we introduce the empirical label consistency measure ($\mathit{LC}_{\text{emp}}$). For a given structural distance $k$, $\mathit{LC}_{\text{emp}}(k)$ captures the average proportion of $k$-hop neighbors sharing the same label as the anchor node:

\begin{equation}
\scalemath{0.90}{
\begin{aligned}
\mathit{LC}_{\text{emp}}(k)=\frac{1}{N} \sum_{v_i \in \mathcal{V}} \frac{|\{v_j: v_j \in \mathcal{N}(v_i)^{[k]} \wedge y_i=y_j\}|}{|\mathcal{N}(v_i)^{[k]}|}.
\end{aligned}}
\label{eq:lc_stat}
\end{equation}

Our analysis of $\mathit{LC}_{\text{emp}}(k)$ across different structural distances reveals how label consistency evolves in different types of graphs (as shown in Figure~\ref{fig:motivation}):

\begin{itemize}[left=0pt]
    \item \textbf{Homophily Graphs:} Display \ul{smooth, monotonic decay} in label consistency, reflecting their local preference for similar neighbors. Starting from high initial values ($\mathit{LC}_{\text{emp}}(1)$ typically > 0.5), the consistency gradually diminishes as structural distance increases.
    
    \item \textbf{Heterophily Graphs:} Exhibit \uwave{oscillatory decay patterns}, where label consistency may occasionally increase at certain hops due to the graph's tendency to connect dissimilar neighbors, though the overall trend remains downward. Starting from low initial values ($\mathit{LC}_{\text{emp}}(1)$ typically < 0.5), the pattern shows possible local increases but maintains a decreasing trend with structural distance.
\end{itemize}

Despite these distinct decay patterns, a universal property emerges: label consistency exhibits an overall diminishing trend with increasing structural distance across both graph types. 
This decay pattern is statistical in nature. At each hop, we can still find nodes sharing the same label as the anchor node, but the proportion of same-labeled nodes among all neighbors at each hop, when averaged across all nodes in the graph, exhibits a clear decreasing trend as structural distance increases.
Such universal decay suggests a fundamental connection between structural proximity and semantic similarity in graphs, independent of whether the graph exhibits homophily or heterophily.

\subsection{Random Walk Theory of Label Consistency}
\label{sec:theory}

While these empirical observations reveal an intriguing universal pattern transcending local connectivity differences, they raise fundamental questions: Why does label consistency universally decay with distance? What mechanisms drive the distinct decay patterns in homophily versus heterophily graphs? 
To answer these questions, we turn to random walk theory~\cite{lovasz1993random,levin2017markov}, which provides a principled framework for analyzing how information propagates through graph structure.

Following the notations defined in Section~\ref{sec:moti}, consider a random walk starting from a node with label $i$ in a graph with $c$ distinct labels. 
We analyze this process at two levels: the microscopic node-level transitions, where a walker moves between individual nodes with probability $P_{uv} = A_{uv}/\text{deg}(u)$, and the macroscopic label-level dynamics, where we track transitions between different label classes. At the label level, let $p_k(j|i)$ denote the probability of reaching a node with label $j$ after $k$ steps from any node with label $i$. 
The theoretical label consistency $\mathit{LC}_{\text{prob}}(k) = p_k(i|i)$ then measures the probability of returning to a node with the same label after $k$ steps. 
Let $l(v)$ denote the label of node $v$. The evolution of these probabilities is governed by a label transition matrix $T$, where the transition probability $T_{ij}$ is defined in Appendix~\ref{appendix:theory:random_walk} along with detailed discussions on the random walk framework.
We establish the following fundamental results about label distribution dynamics:

\begin{theorem}[Label Distribution Convergence]
\label{thm:convergence}
For a connected graph where each node has a self-loop, there exists a unique probability distribution $\pi$ (where $\pi_j = \frac{\sum_{u:l(u)=j}\text{deg}(u)}{\sum_v \text{deg}(v)}$) such that:
\begin{equation}
\lim_{k \to \infty} p_k(j|i) = \pi_j,
\end{equation}
moreover, the convergence is exponential:
\begin{equation}
|p_k(j|i) - \pi_j| \leq C\lambda^k,
\end{equation}
where $C > 0$ and $\lambda < 1$ are constants determined by the graph structure.
\end{theorem}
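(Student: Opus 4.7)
The plan is to reduce the label-level convergence to the classical convergence theorem for finite irreducible aperiodic Markov chains at the node level, and then lift the result by aggregating over labels. Concretely, I would first work with the node-level random walk whose transition matrix is $P_{uv} = A_{uv}/\text{deg}(u)$. Connectedness of $\mathcal{G}$ makes $P$ irreducible, while the assumed self-loops make $P$ aperiodic (each state has period $1$). These are exactly the hypotheses of the fundamental convergence theorem for finite Markov chains, which yields a unique stationary distribution $\mu$ on $\mathcal{V}$ and guarantees $P^k \to \mathbf{1}\mu^\top$ entrywise as $k \to \infty$.

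The next step is to identify $\mu$. Since $P$ is the transition matrix of a simple random walk on an undirected graph, detailed balance immediately gives $\mu_v = \text{deg}(v)/\sum_{u \in \mathcal{V}} \text{deg}(u)$. Summing over all nodes of a fixed label yields the candidate limit $\pi_j = \sum_{u:\, l(u)=j} \text{deg}(u)/\sum_v \text{deg}(v)$, matching the formula in the theorem. To pass from node-level to label-level probabilities, I would interpret $p_k(j \mid i)$ as the probability, under the node-level walk started from a node drawn from some fixed distribution $w_i$ supported on $\{u : l(u)=i\}$, of landing on a node of label $j$ after $k$ steps, that is, $p_k(j \mid i) = \sum_{u:\, l(u)=i} w_i(u) \sum_{v:\, l(v)=j} (P^k)_{uv}$. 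Since $(P^k)_{uv} \to \mu_v$ for every pair $(u,v)$, the double sum converges to $\sum_{v:\, l(v)=j} \mu_v = \pi_j$ regardless of the weighting $w_i$, which establishes the first claim.

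For the exponential rate I would invoke the spectral-gap refinement of the same convergence theorem. Letting $\lambda = \max\{|\eta| : \eta \text{ is an eigenvalue of } P,\ \eta \neq 1\}$, we have $\lambda < 1$ by irreducibility and aperiodicity, and standard Perron--Frobenius bounds (or a direct spectral decomposition, since $P$ is reversible and hence similar to a symmetric matrix with real spectrum) give $|(P^k)_{uv} - \mu_v| \leq C' \lambda^k$ for a constant $C'$ depending only on $P$. Aggregating over $v$ with label $j$ and averaging against $w_i$ preserves exponential decay with the same rate $\lambda$ and at most a larger multiplicative constant $C$.

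The main technical subtlety, and the step I would treat most carefully, is the precise definition of $p_k(j \mid i)$ and of the label transition matrix $T$ that the paper defers to the appendix: the process induced on labels is in general \emph{not} Markovian, so $T$ must be understood either as an average of node-level transitions under a chosen within-class distribution or as a projection of the node-level dynamics onto label classes. Once this is pinned down consistently, both the convergence and the exponential bound follow from the node-level statements by simple aggregation rather than from any direct spectral analysis of $T$ itself.
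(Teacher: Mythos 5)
Your proposal is correct, but it takes a genuinely different route from the paper. The paper never descends to the node-level chain: it constructs the label-level transition matrix $T$ by degree-weighted aggregation, proves in a separate lemma that $T$ itself is row-stochastic, irreducible and aperiodic (connectivity gives irreducibility of $T$ via label sequences along node paths; self-loops give $T_{ii}>0$), applies Perron--Frobenius directly to $T$, and then verifies $\pi T=\pi$ for the claimed $\pi_j$ by an explicit edge-counting computation rather than by pushing forward the node-level stationary measure. You instead apply the convergence theorem to $P$, read off $\mu_v\propto\text{deg}(v)$ from reversibility, and obtain the label-level statement by summing $(P^k)_{uv}$ over label classes. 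The subtlety you flag at the end is real and is, in fact, the sharpest point of divergence: the paper's $p_k(j|i)$ is \emph{defined} as $(T^k)_{ij}$, i.e., the $k$-step probability of a Markov chain on labels, whereas your $p_k(j|i)$ is the true aggregated $k$-step label probability of the node-level walk; these agree at $k=1$ under the degree-weighted within-class start but differ for intermediate $k$ unless the walk is lumpable with respect to the label partition. Your aggregation argument is the more robust one (it needs no Markovianity of the induced label process and gets the rate from the spectral gap of $P$), while the paper's choice to work with $T$ buys the small explicit transition matrices whose second eigenvalue $\lambda_2$ is later used to separate the smooth homophily decay from the oscillatory heterophily decay --- something your node-level bound does not directly provide.
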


\begin{proof}[Proof Sketch]
The key is showing that $T$ satisfies essential Markov chain properties. The Perron-Frobenius theorem then yields the unique stationary distribution $\pi$ and exponential convergence. Full proof in Appendix~\ref{appendix:theory:proof_convergence}.
\end{proof}

This stationary distribution $\pi_j$ represents the proportion of edges connected to nodes with label $j$, naturally characterizing the structural importance of different labels (see Appendix~\ref{appendix:theory:proof_convergence} for detailed interpretation). This convergence result leads us to analyze how Label Consistency evolves during this process:

\begin{corollary}[Label Consistency Decay]
\label{cor:lc_decay}
The Label Consistency $\mathit{LC}_{\text{prob}}(k) = p_k(i|i)$ exhibits a decay pattern where:
\begin{equation}
\scalemath{0.90}{
\begin{aligned}
&\mathit{LC}_{\text{prob}}(0) = 1, \\
&\lim_{k \to \infty} \mathit{LC}_{\text{prob}}(k) = \pi_i < 1, \\
&|\mathit{LC}_{\text{prob}}(k) - \pi_i| \leq C\lambda^k.
\end{aligned}}
\end{equation}
\end{corollary}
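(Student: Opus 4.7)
The plan is to derive the corollary directly by specializing Theorem~\ref{thm:convergence} to the diagonal case $j=i$, then handle the three claims separately: the initial value, the limit, and the exponential rate. Since $\mathit{LC}_{\text{prob}}(k)$ is defined as the diagonal entry $p_k(i\mid i)$ of the $k$-step label transition matrix, each of the three statements reduces to a special case of what the theorem already guarantees, plus a short argument for the strict inequality $\pi_i < 1$.

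First I would dispatch the initialization $\mathit{LC}_{\text{prob}}(0) = 1$ directly from the definition: at $k=0$ no transition has occurred, so conditional on starting at a node with label $i$, the walker is still at a node with label $i$ with probability one. Formally, $p_0(j \mid i) = \delta_{ij}$, whence $p_0(i \mid i) = 1$. Next, for the limit, I would apply Theorem~\ref{thm:convergence} with $j=i$, which immediately gives $\lim_{k \to \infty} p_k(i \mid i) = \pi_i$. Finally, the exponential decay bound $|\mathit{LC}_{\text{prob}}(k) - \pi_i| \leq C\lambda^k$ is obtained by instantiating the theorem's convergence rate at $j=i$, with the same constants $C$ and $\lambda$ inherited from the spectral properties of $T$.

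The one step that is not a mechanical specialization is the strict inequality $\pi_i < 1$. For this I would argue as follows: recalling $\pi_i = \bigl(\sum_{u:l(u)=i}\deg(u)\bigr)/\bigl(\sum_v \deg(v)\bigr)$, strict inequality holds provided there exists at least one node whose label is not $i$, i.e., whenever the graph contains $c \geq 2$ label classes — which is the standard setting for a classification problem. Because the graph is connected (hypothesis of Theorem~\ref{thm:convergence}) and contains nodes with labels other than $i$, the corresponding degree contributions are strictly positive, so $\sum_{u:l(u)=i}\deg(u) < \sum_v \deg(v)$, giving $\pi_i < 1$.

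I do not anticipate a genuine obstacle here, as the corollary is a direct consequence of Theorem~\ref{thm:convergence}. The only subtlety worth flagging explicitly is that the bound $\pi_i < 1$ requires the trivial but necessary assumption that the label set is non-degenerate ($c \geq 2$); in the degenerate single-label case the corollary still holds but collapses to $\mathit{LC}_{\text{prob}}(k) \equiv 1$, which is consistent with $\pi_i = 1$. I would therefore state the proof in three short lines, one per claim, and append a remark noting that the decay constants $C$ and $\lambda$ are inherited unchanged from Theorem~\ref{thm:convergence}, so the rate of Label Consistency decay is governed by the same spectral gap of the label transition matrix $T$ that controls overall label distribution convergence.
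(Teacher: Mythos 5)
Your proposal is correct and follows essentially the same route as the paper's proof: the initial value from the definition, the limit and rate by specializing Theorem~\ref{thm:convergence} to $j=i$, and $\pi_i<1$ from the degree-sum decomposition. If anything, you are slightly more careful than the paper in flagging that $\pi_i<1$ genuinely requires the non-degeneracy assumption $c\geq 2$ (the paper attributes the existence of other-labeled nodes to connectivity, which does not by itself guarantee it), so no changes are needed.
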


\begin{proof}[Proof Sketch]
Graph connectivity ensures $\pi_i<1$ since some edges must connect to nodes with other labels. Exponential decay follows from spectral decomposition of $T^k$. Detailed proof in Appendix~\ref{appendix:theory:proof_decay}.
\end{proof}

To further explain the distinct decay patterns in homophily versus heterophily graphs, we analyze the spectral properties of their transition matrices in a simplified two-label setting:

\begin{proposition}[Decay Pattern Characterization]
\label{prop:decay_patterns}
In a two-label simplified model (see Appendix~\ref{appendix:theory:contrasting} for assumptions):
\begin{itemize}[left=0pt]
    \item \textbf{Homophily Graphs:} $T \approx \begin{bmatrix}p & 1-p \\ 1-p & p\end{bmatrix}$ with $p \gg 0.5$, leading to $\lambda_2 = 2p-1 > 0$
    \item \textbf{Heterophily Graphs:} $T \approx \begin{bmatrix}1-p & p \\ p & 1-p\end{bmatrix}$ with $p \gg 0.5$, resulting in $\lambda_2 = 1-2p < 0$
\end{itemize}
These eigenvalue patterns explain the distinct decay behaviors: monotonic decay when $\lambda_2 > 0$ (homophily) versus oscillatory decay when $\lambda_2 < 0$ (heterophily). A rigorous analysis of these patterns can be found in Appendix~\ref{appendix:theory:contrasting}.
\end{proposition}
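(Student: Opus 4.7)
The plan is to exploit the fact that both candidate transition matrices are $2\times 2$ row-stochastic and doubly symmetric, so their spectra can be written down in closed form, and then to read off the decay behaviour directly from the sign of the subdominant eigenvalue via the spectral decomposition of $T^k$. First I would state the simplified-model assumptions explicitly (two balanced label classes, symmetric inter-/intra-class mixing parameterised by a single probability $p$, self-loops ensuring aperiodicity as in Theorem~\ref{thm:convergence}), so that both matrices are doubly stochastic and the stationary distribution is $\pi=(1/2,1/2)$.

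Next I would compute the eigenvalues. Because each matrix is row-stochastic, $\lambda_1=1$ is automatic with left eigenvector $\pi$ and right eigenvector $\mathbf{1}$. The remaining eigenvalue is fixed by the trace:
\begin{equation}
\lambda_2 \;=\; \mathrm{tr}(T)-1.
\end{equation}
Plugging in the homophily form gives $\lambda_2=2p-1>0$ for $p>1/2$, while the heterophily form gives $\lambda_2=1-2p<0$ for $p>1/2$. I would also record the corresponding second eigenvector, which in both cases is $(1,-1)^\top$ on the right and $(1,-1)$ on the left (up to normalisation), since this is forced by symmetry and orthogonality to $\pi$.

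I would then invoke the spectral decomposition established in the proof of Corollary~\ref{cor:lc_decay}:
\begin{equation}
T^k \;=\; \mathbf{1}\pi^\top + \lambda_2^{k}\, v_2 u_2^{\top},
\end{equation}
from which
\begin{equation}
\mathit{LC}_{\text{prob}}(k) \;=\; (T^k)_{ii} \;=\; \pi_i + c_i\,\lambda_2^{k},
\end{equation}
with $c_i>0$ determined by the $i$-th entries of $v_2$ and $u_2$ (the symmetry of the two-label model guarantees $c_i=1/2$). The qualitative conclusion is now immediate: when $\lambda_2>0$ (homophily), $\lambda_2^{k}$ is positive and strictly decreasing in $k$, so $\mathit{LC}_{\text{prob}}(k)$ relaxes monotonically down to $\pi_i$; when $\lambda_2<0$ (heterophily), $\lambda_2^{k}$ alternates in sign while its magnitude $|\lambda_2|^{k}$ decays geometrically, producing the oscillatory decay around $\pi_i$ observed empirically in Figure~\ref{fig:motivation}.

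The main obstacle I anticipate is not algebraic but interpretive: I need to argue that the two-label symmetric caricature faithfully captures the qualitative spectral behaviour of real multi-class graphs. I would therefore close by sketching a robustness remark---small perturbations of $T$ preserve the sign of $\lambda_2$ by continuity of eigenvalues, and in the general $c$-class setting an analogous dichotomy holds whenever the dominant non-unit eigenvalue inherits its sign from homophilic (positive) or heterophilic (negative) mixing on the principal non-stationary mode, with the detailed multi-class extension deferred to Appendix~\ref{appendix:theory:contrasting}.
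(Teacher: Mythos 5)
Your proposal is correct and follows essentially the same route as the paper's Appendix analysis: reduce to the symmetric two-label transition matrix under the balanced-class assumptions, extract $\lambda_2$ (you via the trace, the paper via the characteristic polynomial --- equivalent for $2\times 2$ stochastic matrices), and read off monotonic versus oscillatory decay from the sign of $\lambda_2^k$ in the spectral decomposition of $(T^k)_{ii}$. Your explicit eigenvectors, the coefficient $c_i=1/2$, and the perturbation remark are minor refinements of the same argument, not a different approach.
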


While we analyze a binary setting for analytical clarity, this simplified model illuminates how structural bias in transition probabilities (dominated by self-transitions in homophily vs. cross-label transitions in heterophily) determines the sign of $\lambda_2$ and shapes decay patterns. This analysis provides an interpretable lens into how local properties influence global behaviors, while our general framework (Theorem~\ref{thm:convergence} and Corollary~\ref{cor:lc_decay}) establishes the universality for arbitrary number of labels.
In summary, empirical observations and theoretical analysis show that \textit{\textbf{label consistency decays with structural distance---both in smooth and oscillatory patterns}}. 
This decay law will fundamentally change how we view similarity in GCL, leading to a new framework that naturally captures structural relationships, detailed next.

\begin{figure*}[!t]
\centering
\includegraphics[width=0.90\linewidth]{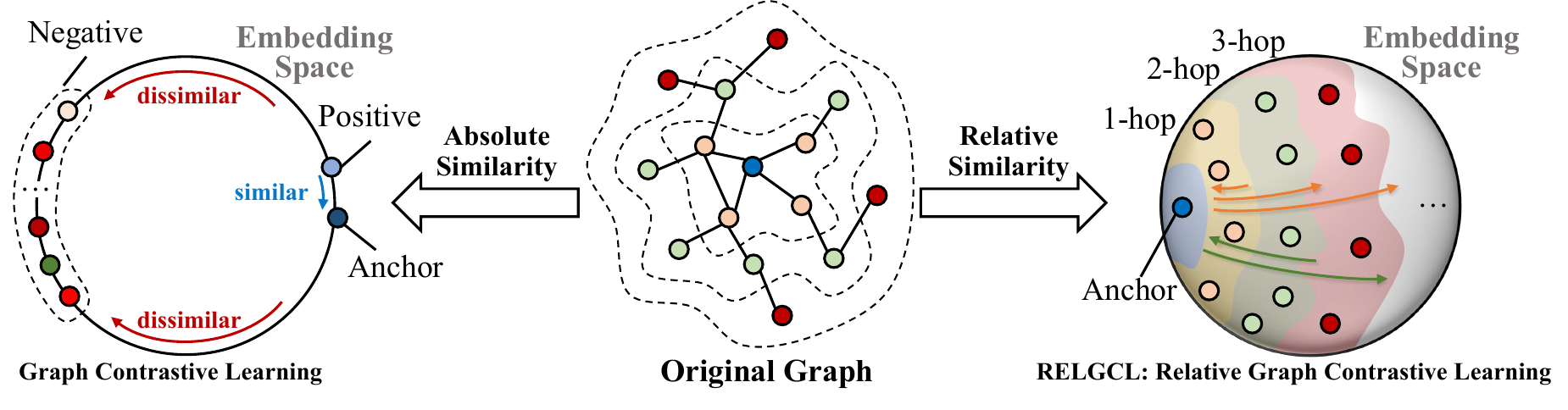}
\vspace{-4.0mm}
\caption{
A philosophical comparison of absolute similarity and relative similarity in GCL. The right side presents the core idea of \mymodel{}.
}
\vspace{-4mm}
\label{fig:framework}
\end{figure*}
\section{Methodology}

Building upon our established observations and theoretical foundations of the universal label consistency diminishing property, we propose \mymodel{} (\textsc{Rel}ative \textsc{G}raph \textsc{C}ontrastive \textsc{L}earning), a principled framework to leverage this inherent characteristic for GCL.
The key insight is to preserve the naturally encoded relative similarity patterns that persist across both homophily and heterophily graphs.
In this section, our goal is to learn a graph encoder (we take graph convolutional network~\cite{welling2016semi}) $f: (\mathbf{X}, \mathbf{A}) \rightarrow \mathbf{H}\in \mathbb{R}^{N \times d}$ that maps nodes to low-dimensional representations ($d \ll D$) while preserving the inherent relative similarity patterns in graphs. 
Let $\mathbf{h}_i \in \mathbb{R}^d$ denote the learned representation of node $v_i$ in the embedding space (i.e., the $i$-th row of $\mathbf{H}$). For clarity, we denote $\mathbb{H}_i^{[n]} = \{\mathbf{h}_j \in \mathbb{R}^d: v_j \in \mathcal{N}(v_i)^{[n]}\}$ as the set of representations in the embedding space corresponding to $v_i$'s $n$-th hop neighbors.

\subsection{From Label Consistency to Relative Similarity}
\label{sec:relative_similarity}

Building upon the universal label consistency diminishing property established in Section~\ref{sec:moti}, we formalize this statistical pattern as a principled foundation for defining relative similarity in graphs.
Formally, for any node $v_i$, we can quantify the label consistency of its $n$-th hop neighbors through:
\begin{equation}
\scalemath{0.90}{
\begin{aligned}
    \text{sim}_{\text{stat}}(v_i,n) = \frac{|\{v_j: v_j \in \mathcal{N}(v_i)^{[n]} \wedge y_i=y_j\}|}{|\mathcal{N}(v_i)^{[n]}|},
\end{aligned}}
\end{equation}
similarly, for nodes beyond hop $n$, we define:
\begin{equation}
\scalemath{0.85}{
\begin{aligned}
    \text{sim}_{\text{stat}}(v_i,{>}n) = \frac{|\{v_j: v_j \in \bigcup_{m>n} \mathcal{N}(v_i)^{[m]} \wedge y_i=y_j\}|}{|\bigcup_{m>n} \mathcal{N}(v_i)^{[m]}|}.
\end{aligned}}
\end{equation}

Our empirical and theoretical analyses in Section~\ref{sec:moti} prove that while this relationship may not hold deterministically for every individual node, it holds statistically across the graph:
\begin{equation}
\scalemath{0.90}{
\begin{aligned}
    \mathbb{E}_{v\in\mathcal{V}}[\text{sim}_{\text{stat}}(v,n)] > \mathbb{E}_{v\in\mathcal{V}}[\text{sim}_{\text{stat}}(v,{>}n)],
\end{aligned}}
\end{equation}
this statistical property reveals a fundamental characteristic of graphs: nodes at hop $n$ collectively exhibit stronger semantic similarity to the anchor node compared to further nodes, regardless of the graph's homophily nature. We term this the ``relative similarity" property, as it describes similarity from a \textit{\textbf{relative}} and \textit{\textbf{collective}} perspective rather than an \textit{\textbf{absolute}} or \textit{\textbf{individual}} one.
To preserve this property in the learned representations $\mathbf{H}$, we need our embeddings to satisfy an analogous relationship in the representation space. Let $s(\cdot,\cdot)$ denote a similarity measure between node representations. For any node $v_i$, its representation $\mathbf{h}_i$ should maintain:
\begin{equation}
\scalemath{0.85}{
\begin{aligned}
    \mathbb{E}\left[\frac{\sum_{\mathbf{h}_j \in \mathbb{H}_i^{[n]}} s(\mathbf{h}_i, \mathbf{h}_j)}{|\mathbb{H}_i^{[n]}|}\right] > 
    \mathbb{E}\left[\frac{\sum_{\mathbf{h}_j \in \bigcup_{m>n} \mathbb{H}_i^{[m]}} s(\mathbf{h}_i, \mathbf{h}_j)}{|\bigcup_{m>n} \mathbb{H}_i^{[m]}|}\right],
\end{aligned}}
\end{equation}
this formulation translates the statistical patterns in label space to constraints in the embedding space, providing a principled objective for GCL. The key challenge lies in how to effectively model and preserve such collective relative similarity, which we address in the following sections.

\subsection{Building Collective Similarity}

The relative similarity property established in Section~\ref{sec:relative_similarity} reveals two key characteristics that guide our modeling choices: 
\textbf{(1)} similarity should be measured collectively across groups of nodes rather than individual pairs, 
and \textbf{(2)} the relationship is statistical rather than deterministic. These insights drive us to develop a framework that can effectively handle collective similarity patterns.
A straightforward approach to handle multiple positive examples~\cite{khosla2020supervised} in contrastive learning is to extend InfoNCE loss~\cite{oord2018representation} by summing over positive examples:
\begin{equation}
\scalemath{0.85}{
\begin{aligned}
\mathcal{L}^{\text{out}}=-\sum_{p \in \mathbf{P}} \log \frac{\exp(\theta(q, p)/\tau)}{\exp(\theta(q, p)/\tau)+\sum_{n \in \mathbf{N}} \exp(\theta(q, n)/\tau)},
\end{aligned}}
\label{equation:out}
\end{equation}
where $\theta(\cdot,\cdot)=s(g(\cdot), g(\cdot))$ combines a nonlinear projection head $g(\cdot)$ and a cosine similarity measure $s(\cdot,\cdot)$, and $\tau$ is a temperature parameter~\cite{Zhu2021GraphCL,chen2020simple,tschannen2019mutual}. However, this formulation enforces high similarity between the query $q$ and each individual positive example $p$~\cite{hoffmann2022ranking,khosla2020supervised}, conflicting with the statistical nature of relative similarity in graphs.
A more principled approach is to sum over positive examples inside the logarithm~\cite{miech2020end}:
\begin{equation}
\scalemath{0.85}{
\begin{aligned}
\mathcal{L}^{\text{in}}=-\log \frac{\sum_{p \in \mathbf{P}} \exp(\theta(q, p)/\tau)}{\sum_{p \in \mathbf{P}} \exp(\theta(q, p)/\tau)+\sum_{n \in \mathbf{N}} \exp(\theta(q, n)/\tau)},
\end{aligned}}
\label{equation:in}
\end{equation}
this formulation offers several advantages: 
\textbf{(1)} The summation inside the logarithm naturally handles collective patterns by comparing aggregate similarities between groups~\cite{hoffmann2022ranking,miech2020end}, 
\textbf{(2)} it allows for variation within groups while maintaining their overall relative relationships, 
and \textbf{(3)} the soft nature of the exponential terms aligns with the statistical nature of our similarity definition. 
From an optimization perspective, minimizing $\mathcal{L}^{\text{in}}$ leads to maximizing the similarity ratio $\frac{\sum_{p \in \mathbf{P}} \exp(\theta(q, p)/\tau)}{\sum_{n \in \mathbf{N}} \exp(\theta(q, n)/\tau)}$, which simultaneously increases collective similarity with positive examples $\mathbf{P}$ while decreasing that with negative examples $\mathbf{N}$---naturally aligning with our goal of modeling relative similarity, as it enhances the similarity distinction between different node groups while maintaining their collective nature.

\subsection{Learning Framework}
\label{sec:learning_framework}

Based on the collective similarity building block, specifically adopting the $\mathcal{L}^{\text{in}}$ formulation (as shown in Equation~\ref{equation:in}) due to its advantages in handling collective patterns, we now present \mymodel{} for preserving the graph-inherent relative similarity (an overview of \mymodel{} is presented in Figure~\ref{fig:framework}). 
For each node $v_i$, within a $k$-hop neighborhood range, we aim to ensure that its $n$-th hop neighbors ($n \leq k$) are collectively more similar to it than further neighbors. 
We propose two complementary implementations of \mymodel{} that differ in how they handle the comparison with further neighbors.
Both approaches use a threshold $\alpha \in (0,1)$ to prevent over-optimization of similarity ratios, reflecting the statistical rather than deterministic nature of relative similarity established in Section~\ref{sec:theory}---while closer nodes exhibit stronger collective similarity statistically, this relationship should not be enforced as an absolute constraint.

\noindent\textbf{\mymodel{}\textsubscript{\textsc{pair}}.} This approach examines relative similarity through \textit{\textbf{sequential pairwise comparisons between hops}} (e.g., comparing hop-1 vs. hop-2, hop-1 vs. hop-3, hop-2 vs. hop-3, etc.). For any node $v_i$, we define the pairwise similarity ratio between hop $n$ and hop $n+m$ as:

\begin{equation}
\scalemath{0.70}{
\begin{aligned}
    r_{n,m}(\mathbf{h}_i) = \frac{\sum_{\mathbf{h}_* \in \mathbb{H}_i^{[n]}}\exp(\theta(\mathbf{h}_i, \mathbf{h}_*)/\tau)}
    {\sum_{\mathbf{h}_* \in \mathbb{H}_i^{[n]}}\exp(\theta(\mathbf{h}_i, \mathbf{h}_*)/\tau) + 
    \sum_{\mathbf{h}_\diamond \in \mathbb{H}_i^{[n+m]}}\exp(\theta(\mathbf{h}_i, \mathbf{h}_\diamond)/\tau)}.
\end{aligned}}
\end{equation}

The overall pairwise objective is:
\begin{equation}
\scalemath{0.85}{
\begin{aligned}
    \mathcal{L}_{\text{pair}} = - \sum\limits_{v_i\in \mathcal{V}} \frac{1}{k} \sum\limits_{n=1}^{k} \sum\limits_{m=1}^{k-n+1} \log 
    [\min(r_{n,m}(\mathbf{h}_i), \alpha)].
\end{aligned}}
\label{equation:pair-wise}
\end{equation}

The pairwise implementation enables fine-grained modeling of structural transitions between specific hop distances, potentially making it more suitable for capturing oscillatory patterns in graphs (Proposition~\ref{prop:decay_patterns}).

\noindent\textbf{\mymodel{}\textsubscript{\textsc{list}}.} This approach compares each hop against \textit{\textbf{all its subsequent hops as a whole}} (e.g., comparing hop-1 vs. \{hop-2, hop-3, ...\}, hop-2 vs. \{hop-3, hop-4, ...\}, etc.). For node $v_i$, we define the listwise similarity ratio for hop $n$ as:

\begin{equation}
\scalemath{0.75}{
\begin{aligned}
    r_n(\mathbf{h}_i) = \frac{\sum_{\mathbf{h}_* \in \mathbb{H}_i^{[n]}}\exp(\theta(\mathbf{h}_i, \mathbf{h}_*)/\tau)}
    {\sum_{m=n}^{k+1} \sum_{\mathbf{h}_\diamond \in \mathbb{H}_i^{[m]}}\exp(\theta(\mathbf{h}_i, \mathbf{h}_\diamond)/\tau)}.
\end{aligned}}
\end{equation}

The overall listwise objective is:
\begin{equation}
\scalemath{0.85}{
\begin{aligned}
    \mathcal{L}_{\text{list}} = - \sum\limits_{v_i\in \mathcal{V}} \frac{1}{k} \sum\limits_{n=1}^{k} \log [\min(r_n(\mathbf{h}_i), \alpha)].
\end{aligned}}
\label{equation:list-wise}
\end{equation}

The listwise implementation aggregates all subsequent hops collectively, providing a holistic view of the broader neighborhood structure.

The threshold $\alpha$ in both objectives prevents the similarity ratios from being pushed to extremes while preserving the relative similarity patterns. 
Both implementations effectively preserve the relative similarity patterns while offering complementary perspectives on handling multi-hop relationships. 
A detailed comparison with previous approaches and complexity analysis is provided in Appendix~\ref{appendix:method_analysis}.

\section{Experiments}

\subsection{Experimental Setting}

A  detailed experimental setting is provided in Appendix~\ref{appendix:experimental-setting}.

\noindent\textbf{Datasets.} To validate the universality of our approach, we conduct experiments on 11 real-world datasets spanning diverse domains and scales (2K to 169K nodes), including 8 homophily graphs (Cora, Citeseer, Pubmed, WikiCS, Amazon-Computers, Amazon-Photo, Coauthor-CS, and ogbn-arxiv) and 3 heterophily graphs (Chameleon, Squirrel, and Actor). These datasets cover citation networks~\cite{sen2008collective,hu2020open}, co-occurrence networks~\cite{mcauley2015image,sinha2015overview}, and heterophily networks~\cite{Pei2020GeomGCNGG}, with homophily ratios ranging from 0.18 to 0.83.

\noindent\textbf{Baselines.} We compare \mymodel{} against 20 representative methods across 5 categories: 
2 supervised methods~\cite{welling2016semi,Velickovic2018GraphAN}, 
3 graph autoencoder methods~\cite{Kipf2016VariationalGA,Hou2022GraphMAESM}, 
9 augmentation-based GCL methods~\cite{Velickovic2019DeepGI,Zhu2020DeepGC,thakoor2021large,Zhang2021FromCC}, 
5 augmentation-free GCL methods~\cite{Peng2020GraphRL,Lee2022AugmentationFreeSL},
1 multi-task self-supervised learning based method~\cite{jin2021automated}. 
For baselines that either don't use standard splits or don't report results on certain datasets, we reproduce their results using official implementations under the same experimental setup for fair comparison.

\noindent\textbf{Evaluation Protocol.} Following standard practice in GCL~\cite{Velickovic2019DeepGI,Zhu2020DeepGC,thakoor2021large}, we evaluate using linear evaluation: first train the graph encoder in a self-supervised manner using our relative similarity objectives, then freeze it to generate node embeddings for training a logistic regression classifier. For datasets with multiple splits (e.g., WikiCS with 20 public splits), we conduct experiments on all provided splits. We report the average classification accuracy and standard deviation over 20 random runs with different random seeds to ensure reliability.

\noindent\textbf{Implementation Details.} We implement \mymodel{} using GCN~\cite{welling2016semi} as the encoder, optimized with Adam~\cite{kingma2014adam} on a NVIDIA V100 GPU. Based on our empirical analysis in Section~\ref{sec:empirical}, we set the neighborhood range $k=4$ to capture meaningful structural patterns, as the number of semantically similar neighbors becomes particularly small beyond 4 hops. For fair comparison, we use the same architecture and training procedure across all competing methods, with hyperparameters tuned on validation sets through grid search over standard ranges. The similarity threshold $\alpha$ controls the strength of relative similarity constraints as discussed in Section~\ref{sec:learning_framework}.

\begin{table*}[!t]
    \begin{minipage}[t]{0.65\linewidth}
        \centering
        \scriptsize
        \begin{threeparttable}
        \setlength{\tabcolsep}{0.15mm}
        \begin{tabular}{@{}lccccccc|c@{}}
        \toprule
        \textbf{Method} & \textbf{Cora} & \textbf{Citeseer} & \textbf{Pubmed} & \textbf{WikiCS} & \textbf{Computers} & \textbf{Photo} & \textbf{Co.CS} & \textbf{Rank} \\ 
        \midrule
        GCN & $81.5$ & $70.3$ & $79.0$ & $77.19{\pm}.12$ & $86.51{\pm}.54$ & $92.42{\pm}.22$ & $93.03{\pm}.31$ & 16.7 \\
        GAT & $83.0{\pm}.7$ & $72.5{\pm}.7$ & $79.0{\pm}.3$ & $77.65{\pm}.11$ & $86.93{\pm}.29$ & $92.56{\pm}.35$ & $92.31{\pm}.24$ & 14.4 \\
        GAE & $71.5{\pm}.4$ & $65.8{\pm}.4$ & $72.1{\pm}.5$ & $70.15{\pm}.01$ & $85.27{\pm}.19$ & $91.62{\pm}.13$ & $90.01{\pm}.71$ & 21.4 \\
        VGAE & $76.3{\pm}.2$ & $66.8{\pm}.2$ & $75.8{\pm}.4$ & $75.63{\pm}.19$ & $86.37{\pm}.21$ & $92.20{\pm}.11$ & $92.11{\pm}.09$ & 19.7 \\
        GraphMAE & $84.2{\pm}.4$ & $73.4{\pm}.4$ & $81.1{\pm}.4$ & $77.12{\pm}.30$ & $79.44{\pm}.48$ & $90.71{\pm}.40$ & $93.13{\pm}.15$ & 11.9 \\
        DGI & $82.3{\pm}.6$ & $71.8{\pm}.7$ & $76.8{\pm}.6$ & $75.35{\pm}.14$ & $83.95{\pm}.47$ & $91.61{\pm}.22$ & $92.15{\pm}.63$ & 19.0 \\
        MVGRL & $83.5{\pm}.4$ & $73.3{\pm}.5$ & $80.1{\pm}.7$ & $77.52{\pm}.08$ & $87.52{\pm}.11$ & $91.74{\pm}.07$ & $92.11{\pm}.12$ & 13.6 \\
        GRACE & $81.9{\pm}.4$ & $71.2{\pm}.5$ & $80.6{\pm}.4$ & $78.19{\pm}.01$ & $86.25{\pm}.25$ & $92.15{\pm}.24$ & $92.93{\pm}.01$ & 15.4 \\
        GCA & $82.1{\pm}.1$ & $71.3{\pm}.4$ & $80.2{\pm}.4$ & $78.30{\pm}.00$ & $87.85{\pm}.31$ & $92.49{\pm}.09$ & $93.10{\pm}.01$ & 13.7 \\
        BGRL & $82.7{\pm}.6$ & $71.1{\pm}.8$ & $79.6{\pm}.5$ & $79.31{\pm}.55$ & $89.62{\pm}.37$ & $93.07{\pm}.34$ & $92.67{\pm}.21$ & 11.7 \\
        G-BT & $81.5{\pm}.4$ & $71.9{\pm}.5$ & $80.4{\pm}.6$ & $76.65{\pm}.62$ & $88.14{\pm}.33$ & $92.63{\pm}.44$ & $92.95{\pm}.17$ & 14.3 \\
        CCA-SSG & $84.2{\pm}.4$ & $73.1{\pm}.3$ & $81.6{\pm}.4$ & $78.65{\pm}.14$ & $88.74{\pm}.28$ & $93.14{\pm}.14$ & $93.31{\pm}.22$ & 6.4 \\
        gCooL & $83.2{\pm}.5$ & $72.7{\pm}.4$ & $80.5{\pm}.4$ & $78.74{\pm}.04$ & $88.85{\pm}.14$ & $93.18{\pm}.12$ & $93.32{\pm}.02$ & 8.3 \\
        HomoGCL & $\mathbf{84.5}{\pm}.5$ & $72.3{\pm}.7$ & $81.1{\pm}.3$ & $78.84{\pm}.47$ & $88.46{\pm}.20$ & $92.92{\pm}.18$ & $92.74{\pm}.22$ & 8.9 \\
        COSTA & $83.3{\pm}.3$ & $72.1{\pm}.3$ & $81.1{\pm}.2$ & $79.12{\pm}.02$ & $88.32{\pm}.03$ & $92.56{\pm}.45$ & $92.94{\pm}.10$ & 9.7 \\
        SUGRL & $83.4{\pm}.5$ & $73.0{\pm}.4$ & $81.9{\pm}.3$ & $78.97{\pm}.22$ & $88.91{\pm}.22$ & $92.85{\pm}.24$ & $92.83{\pm}.23$ & 7.9 \\
        AFGRL & $83.2{\pm}.4$ & $72.6{\pm}.3$ & $80.8{\pm}.6$ & $77.62{\pm}.49$ & $89.88{\pm}.33$ & $93.22{\pm}.28$ & $93.27{\pm}.17$ & 8.0 \\
        SP-GCL & $83.2{\pm}.2$ & $72.0{\pm}.4$ & $79.1{\pm}.8$ & $79.01{\pm}.51$ & $89.68{\pm}.19$ & $92.49{\pm}.31$ & $91.92{\pm}.10$ & 12.6 \\
        GraphACL & $84.2{\pm}.3$ & $73.6{\pm}.2$ & $82.0{\pm}.2$ & $79.27{\pm}.45$ & $89.80{\pm}.25$ & $93.31{\pm}.19$ & $92.77{\pm}.14$ & \underline{5.3} \\
        AUTOSSL & $83.1{\pm}.4$ & $72.1{\pm}.4$ & $80.9{\pm}.6$ & $76.80{\pm}.13$ & $88.18{\pm}.43$ & $92.71{\pm}.32$ & $93.35{\pm}.09$ & 11.0 \\
        \midrule
        \rowcolor{gray!20} \mymodel{}\textsubscript{\textsc{Pair}} & \underline{$84.4{\pm}.2$} & $\mathbf{73.7{\pm}.4}$ & \underline{$82.2{\pm}.3$} & \underline{$80.14{\pm}.51$} & $\mathbf{90.14{\pm}.35}$ & $\mathbf{93.42{\pm}.44}$ & $\mathbf{93.53{\pm}.12}$ & $\mathbf{1.6}$ \\
        \rowcolor{gray!20} \mymodel{}\textsubscript{\textsc{List}} & $\mathbf{84.5{\pm}.4}$ & \underline{$73.6{\pm}.5$} & $\mathbf{82.7{\pm}.4}$ & $\mathbf{80.16{\pm}.58}$ & \underline{$89.99{\pm}.38$} & \underline{$93.40{\pm}.48$} & \underline{$93.50{\pm}.12$} & $\mathbf{1.6}$ \\
        \bottomrule
        \end{tabular}
        \end{threeparttable}
    \caption{Classification accuracies on 7 GCL benchmark datasets. `Rank' refers to the average ranking across datasets. \textbf{Bold} indicates the best and \underline{underline} indicates the runner-up.}
    \label{tab:main_results}
    \end{minipage}%
    \raisebox{-17.7mm}{
    \begin{minipage}[t]{0.33\linewidth}
        \centering
        \scriptsize
        \setlength{\tabcolsep}{0.10mm}{
        \begin{tabular}{@{}lccc@{}}
        \toprule
\textbf{Method} & \textbf{Chameleon} & \textbf{Squirrel} & \textbf{Actor} \\ \midrule
    DGI            &       $60.27 \pm 0.70$             &       $42.22 \pm 0.63$            &      $28.30 \pm 0.76$          \\
    GRACE            &       $61.24 \pm 0.53$             &      $41.09 \pm 0.85$             &       $28.27 \pm 0.43$         \\
    GCA            &        $60.94 \pm 0.81$            &      $41.53 \pm 1.09$             &      $28.89 \pm 0.50$          \\
    BGRL            &        $64.86 \pm 0.63$            &      $46.24 \pm 0.70$             &      $28.80 \pm 0.54$          \\
    AFGRL            &        $59.03 \pm 0.78$            &      $42.36 \pm 0.40$             &      $27.43 \pm 1.31$          \\
    SP-GCL            &      $65.28 \pm 0.53$              &      $52.10 \pm 0.67$             &     $28.94 \pm 0.69$           \\ 
    GraphACL            &        \underline{$69.12\pm0.24$}           &        $54.05\pm0.13$            &       $30.03\pm0.13$         \\
    \midrule
    \rowcolor{gray!20} \mymodel{}\textsubscript{\textsc{Pair}}            &        $\mathbf{69.25 \pm 0.89}$            &      $\mathbf{57.67 \pm 0.96}$             &         $\mathbf{30.32 \pm 0.91}$       \\
    \rowcolor{gray!20} \mymodel{}\textsubscript{\textsc{List}}            &         $69.06 \pm 0.86$           &       \underline{$57.53 \pm 0.92$}            &        \underline{$30.23 \pm 0.88$}        \\
 \bottomrule
        \end{tabular}}
        \vspace{-2mm}  
        \caption{Performance on 3 heterophily graphs.}
        \label{tab:heter}
        \vspace{0.5mm}   
        
        \setlength{\tabcolsep}{0.24mm}{
        \begin{tabular}{@{}lccc@{}}
        \toprule
\textbf{Method} & \textbf{Validation} & \textbf{Test}  \\ \midrule
    DGI            &       $71.26 \pm 0.11$             &       $70.34 \pm 0.16$                     \\
    GRACE-Sampling            &       $72.61 \pm 0.15$             &       $71.51 \pm 0.11$                  \\
    G-BT            &       $71.16 \pm 0.14$             &      $70.12 \pm 0.18$                    \\
    BGRL            &      $72.53 \pm 0.09$              &       $71.64 \pm 0.12$                     \\ 
    CCA-SSG            &      $72.34 \pm 0.21$              &       $71.24 \pm 0.20$                     \\ 
    GraphACL            &          $72.59\pm0.20$          &              $71.68\pm0.22$              \\     
    \midrule
    \rowcolor{gray!20} \mymodel{}\textsubscript{\textsc{Pair}}            &        \underline{$72.69 \pm 0.14$}                       &         \underline{$72.06 \pm 0.20$}       \\ 
    \rowcolor{gray!20} \mymodel{}\textsubscript{\textsc{List}}           &         $\mathbf{72.75 \pm 0.12}$                     &        $\mathbf{72.24 \pm 0.19}$        \\
     \midrule
     Supervised GCN            &        $73.00 \pm 0.17$            &        $71.74 \pm 0.29$                   \\ 
    \bottomrule
        \end{tabular}}
        \vspace{-2mm}  
        \caption{Performance on ogbn-arxiv.}
        \label{tab:ogb}
    \end{minipage}}
\end{table*}

\begin{figure*}[!t]
\centering
\includegraphics[width=0.95\linewidth]{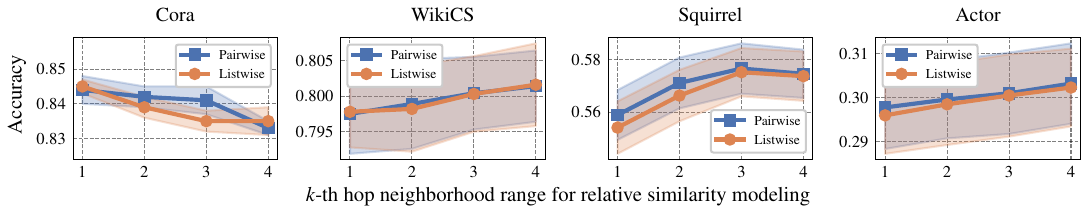}
\vspace{-4mm}  
\caption{Results of \mymodel{} with neighborhood range changing as $k=1, 2, 3, 4$, respectively.}
\vspace{-6mm}
\label{fig:different_hops_main}
\end{figure*}

\subsection{Overall Performance}

\noindent\textbf{Performance on Homophily Graphs.}
We first evaluate \mymodel{} on 7 widely-used homophily graphs. 
Table~\ref{tab:main_results} presents a comprehensive comparison against 20 baselines. Several key observations emerge:
\textbf{(1)} Both implementations of \mymodel{} achieve superior performance across datasets, outperforming all baselines with average ranks of both 1.6 versus the best baseline rank of 5.3. This demonstrates that leveraging inherent structural patterns through relative similarity is more effective than artificially imposing absolute similarity constraints through augmentations.
\textbf{(2)} The comparable performance between pairwise and listwise implementations (achieving SOTA on 4 and 3 datasets respectively) validates the robustness of our relative similarity framework---both approaches effectively capture the smooth decay patterns theoretically predicted in Section~\ref{sec:theory}.
\textbf{(3)} Methods like SUGRL~\cite{Mo2022SimpleUG}, AFGRL~\cite{Lee2022AugmentationFreeSL} and GraphACL~\cite{xiao2024simple} that only leverage local absolute similarity, or AUTOSSL~\cite{jin2021automated} that relies on homophily-guided task search, achieve inferior results. This highlights the advantage of modeling broader structural-semantic relationships over focusing solely on immediate neighborhoods or artificial task designs.

\noindent\textbf{Performance on Heterophily Graphs.}
We further evaluate \mymodel{} on 3 heterophily graphs. Table~\ref{tab:heter} demonstrates the effectiveness of our approach:
\textbf{(1)} Both implementations of \mymodel{} substantially outperform all baselines across these datasets, validating that our approach of modeling relative similarity patterns remains effective even when local connectivity patterns differ significantly from homophily graphs.
\textbf{(2)} \mymodel{}\textsubscript{\textsc{Pair}} consistently outperforms \mymodel{}\textsubscript{\textsc{List}} across all 3 heterophily datasets, as the fine-grained pairwise implementation may better capture the oscillatory decay patterns characteristic of heterophily graphs.
\textbf{(3)} AFGRL~\cite{Lee2022AugmentationFreeSL} that assume local homophily (treating immediate neighbors as similar) show significantly worse performance, confirming the importance of moving beyond local connectivity assumptions to capture universal structural patterns.

\noindent\textbf{Extensibility Analysis.}
We demonstrate \mymodel{}'s extensibility across different \textit{\textbf{scales}} and \textit{\textbf{tasks}}:
\textbf{(1)} On the large-scale ogbn-arxiv graph dataset (169K nodes, 1.2M edges), following~\cite{hu2020open,thakoor2021large}, both implementations outperform existing unsupervised methods (Table~\ref{tab:ogb}). 
\textbf{(2)} For additional downstream tasks (node clustering and similarity search), our method consistently achieves superior performance across representative datasets (Table~\ref{tab:other}), with substantial improvements over methods that only consider local patterns~\cite{Lee2022AugmentationFreeSL}. 
These results validate that capturing broader structural relationships through relative similarity modeling produces versatile representations that generalize well across scales and tasks.

\subsection{Structural Pattern Analysis}

\noindent\textbf{Impact of Neighborhood Range.}
We investigate how the choice of neighborhood range $k$ affects model performance across different graph types and scales. Figure~\ref{fig:different_hops_main} shows the performance variations with different $k$ values on 3 categories of datasets: small homophily graphs (Cora), large homophily graphs (WikiCS), and heterophily graphs (Squirrel, Actor). The analysis reveals distinct patterns:
\textbf{(1)} For small homophily graph, optimal performance is achieved with $k=1$ or $2$, with accuracy declining as $k$ increases. This aligns with our empirical analysis in Section~\ref{sec:empirical}, where label consistency drops substantially (to around $20\%$ at $k=4$) in homophily graphs---when graph size is small, the number of semantically similar nodes at larger distances becomes too sparse to provide meaningful signals.
\textbf{(2)} For large homophily graph and heterophily graphs, performance generally improves with increasing $k$ until $k=3$ or $4$. This suggests that larger graphs benefit from capturing broader structural relationships, while heterophily graphs require a wider range to effectively model their oscillatory decay patterns characterized in Section~\ref{sec:theory}.
\textbf{(3)} Performance stabilizes or declines beyond $k=4$ across most datasets, supporting our default setting. 
Please refer to Appendix~\ref{appendix:experiments:neighbor-range} for more results.

\noindent\textbf{Preservation of Label Consistency Pattern.}
To verify whether our learned representations preserve the universal decay property, we analyze the similarity distributions across different structural distances in the embedding space. Figure~\ref{fig:similarity_visualization} shows the cosine similarity distributions between node representations at different hop distances on WikiCS datasets. The results reveal that: 
\textbf{(1)} The similarities exhibit an overall diminishing trend as structural distance increases, consistent with Theorem~\ref{thm:convergence}'s prediction of exponential convergence in label distributions. This empirically validates that our framework effectively captures the universal decay pattern without explicit constraints.
\textbf{(2)} While maintaining this collective trend, individual variations exist where some further neighbors show higher similarity than closer ones. This aligns with both our empirical observations in Section~\ref{sec:empirical} and the theoretical prediction that convergence occurs at a statistical rather than individual level.

\begin{table}[!t]
\centering
\scriptsize
\setlength{\tabcolsep}{0.35mm}{
\begin{tabular}{@{}lcccccc@{}}
\toprule
\textbf{Dataset} & \multicolumn{2}{c}{\textbf{WikiCS}} & \multicolumn{2}{c}{\textbf{Computers}} & \multicolumn{2}{c}{\textbf{Photo}} \\ 
\textbf{Metric}  & \textbf{NMI}    & \textbf{Sim@5}    & \textbf{NMI}      & \textbf{Sim@5}     & \textbf{NMI}    & \textbf{Sim@5}   \\ \midrule
GRACE                 &       0.4282          &    0.7754               &      0.4793             &          0.8738          &      0.6513           &      0.9155            \\
GCA                 &        0.3373         &    0.7786               &          0.5278         &           0.8826        &       0.6443          &     0.9112             \\
BGRL                 &       0.3969          &     0.7739              &       0.5364            &         0.8947           &       0.6841          &     0.9245             \\
AFGRL                 &       0.4132          &           0.7811        &         0.5520          &     0.8966               &      0.6563           &      0.9236            \\ \midrule
\rowcolor{gray!20} \mymodel{}\textsubscript{\textsc{Pair}}                 & \underline{0.4354}                 &      \underline{0.7919}             &         \textbf{0.5643}          &         \textbf{0.8992}           &      \textbf{0.6935}          &        \textbf{0.9302}         \\
\rowcolor{gray!20} \mymodel{}\textsubscript{\textsc{List}}                & \textbf{0.4376}                 &   \textbf{0.7951}                &        \underline{0.5627}           &       \underline{0.8988}             &       \underline{0.6917}          &         \underline{0.9297}         \\ \bottomrule
\end{tabular}}
\vspace{-0.25cm}
\caption{Performance of node clustering in terms of NMI and performance of node similarity search in terms of Sim@5.}
\label{tab:other}
\end{table}

\begin{table}[!t]
\centering
\scriptsize
\setlength{\tabcolsep}{0.3mm}{
\begin{tabular}{@{}lcccc@{}}
\toprule
 \textbf{Method} & \textbf{Cora} & \textbf{Citeseer} & \textbf{WikiCS} & \textbf{Photo} \\ \midrule
\rowcolor{gray!20} \mymodel{}\textsubscript{\textsc{Pair}} & $84.4$  & $73.7$  &  $80.14$ & $93.42$ \\
\quad\quad\quad \,-$\mathcal{L}^{\text{out}}$ &  $83.4({\color{red}\downarrow}1.0)$ & $73.1({\color{red}\downarrow}0.6)$ & $78.71({\color{red}\downarrow}1.43)$   & $93.16({\color{red}\downarrow}0.26)$  \\
\rowcolor{gray!20} \mymodel{}\textsubscript{\textsc{List}} & $84.5$  & $73.6$  &  $80.16$ & $93.40$ \\
\quad\quad\quad \,-$\mathcal{L}^{\text{out}}$ & $83.4({\color{red}\downarrow}1.1)$  & $73.3({\color{red}\downarrow}0.3)$ & $78.94({\color{red}\downarrow}1.22)$   & $93.18({\color{red}\downarrow}0.22)$ \\
\bottomrule
\end{tabular}}
\vspace{-0.25cm}
\caption{Performance of \mymodel{} adopting different design.}
\vspace{-0.55cm}
\label{table:infonce_var}
\end{table}

\subsection{Framework Analysis}

\noindent\textbf{Collective Similarity Design.}
We compare our collective similarity formulation $\mathcal{L}^{\text{in}}$ (Equation~\ref{equation:in}) against individual similarity summation $\mathcal{L}^{\text{out}}$ (Equation~\ref{equation:out}) on 4 representative datasets. Table~\ref{table:infonce_var} shows consistent performance degradation ($>$1\% drop on Cora and WikiCS) when switching to individual similarity summation. This validates the design motivation in Section~\ref{sec:learning_framework}---while nodes at different distances may share labels with the anchor node (as established in our theoretical analysis), enforcing individual similarity constraints fails to capture the statistical nature of these relationships. Our collective formulation, by comparing aggregate similarities between groups, maintains robustness to individual variations while preserving the broader structural patterns.

\noindent\textbf{Relative Similarity Control.}
We analyze the impact of threshold $\alpha$ by varying it in $[0.0001, 1.0]$ while keeping other parameters fixed. Figure~\ref{fig:different_margins} shows that: 
\textbf{(1)} Both implementations achieve optimal performance at moderate $\alpha$ values---extremely small thresholds overly restrict similarity relationships, while large values fail to distinguish between different structural distances.
\textbf{(2)} \mymodel{}\textsubscript{\textsc{List}} consistently requires smaller optimal $\alpha$ values than \mymodel{}\textsubscript{\textsc{Pair}}, reflecting their different granularities in modeling structural relationships. This empirically supports our framework's flexibility in adapting to different similarity modeling strategies while maintaining effective relative similarity control.

\begin{figure}[!t]
\centering
\vspace{-0.35cm}
\includegraphics[width=0.85\linewidth]{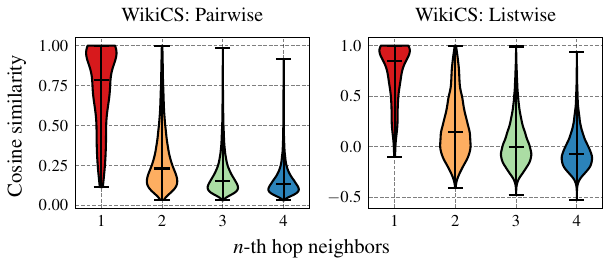}
\vspace{-0.35cm}
\caption{Similarities between the anchor nodes' and associated n-th hop neighbors' representations.}
\label{fig:similarity_visualization}
\vspace{-0.05cm}
\end{figure}

\begin{figure}[!t]
\centering
\vspace{-0.35cm}
\includegraphics[width=0.85\linewidth]{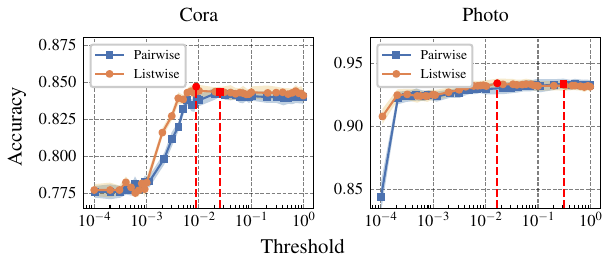}
\vspace{-0.35cm}
\caption{Results of \mymodel{} with different threshold of $\alpha$.}
\vspace{-0.55cm}
\label{fig:different_margins}
\end{figure}

\section{Related Work}

\noindent\textbf{Graph Contrastive Learning.}
GCL methods have followed the computer vision paradigm of maximizing agreement between different views. Early approaches rely on data augmentations~\cite{Velickovic2019DeepGI,Zhu2020DeepGC}, while recent works explore augmentation-free alternatives~\cite{Zhang2022COSTACF,Mo2022SimpleUG,Lee2022AugmentationFreeSL,ning2022graph} or incorporate graph-specific properties~\cite{li2023homogcl,xiao2024simple}. Latest advances propose various similarity modeling strategies, from multi-scale contrastive learning~\cite{zheng2022toward} to prototypical approaches~\cite{lin2022prototypical} and dual contrastive mechanisms~\cite{peng2023dual}. However, all these methods fundamentally operate under the ``absolute similarity" paradigm, facing inherent limitations in graphs due to semantic validity and verification challenges. In contrast, our work leverages broader structural patterns through relative similarity naturally encoded in graphs.

\noindent\textbf{Similarity Patterns in Graphs.}
Traditional graph similarity analysis primarily focuses on characterizing local neighborhood relationships through homophily-heterophily dichotomy~\cite{platonov2024characterizing}. Recent works attempt to explore broader similarity patterns: some leverage path-aware measures~\cite{rossetti2021conformity} or similarity-based path sampling~\cite{zhou2024pathmlp}, while others analyze nonlinear similarity decay in communities~\cite{martinez2022assessing}. However, these methods either focus on engineering better similarity metrics or analyzing specific graph properties, without providing a fundamental understanding of how node relationships systematically vary with structural distance. In contrast, our work reveals and theoretically proves a universal pattern of similarity decay across both homophily and heterophily graphs, providing a principled foundation for understanding graph-inherent similarity patterns.

\section{Conclusion}

This work fundamentally reimagines similarity in graph contrastive learning by discovering a universal pattern: the systematic diminishing of label consistency with structural distance. Through rigorous analysis, we establish random walk guarantees for this pattern in both homophily and heterophily graphs. Building on these insights, we develop \mymodel{}, a principled framework that preserves these inherent patterns through collective similarity objectives. Extensive experiments validate that leveraging natural relative similarity yields superior performance across diverse graph types, opening new directions for robust graph representation learning.

\section{Acknowledgements}
This work is supported by 
the Science and Technology Development Fund (FDCT), Macau SAR (file no. 0123/2023/RIA2, 001/2024/SKL),
the National Natural Science Foundation of China (Grant Nos. 92470204, 62406306 and 62406056),
and the State Key Laboratory of Internet of Things for Smart City (University of Macau) No. SKL-IoTSC(UM)-2024-2026/ORP/GA02/2023.


\appendix

\section{Detailed Theoretical Analysis}
\label{appendix:theory}

We provide a rigorous proof of Theorem~\ref{thm:convergence} and Corollary~\ref{cor:lc_decay} of the main text, building upon classical results from Markov chain theory~\cite{levin2017markov}.

\subsection{Notation and Preliminaries}
We consider an undirected graph $G=(V,E)$, where $V$ is the set of nodes and $E$ is the set of edges. Let $l(u)$ denote the label of node $u$, where labels are in $\{1,\ldots,c\}$. For the adjacency matrix $A$ of $G$, $A_{uv}=1$ if nodes $u,v$ are connected (thus $A_{uv}=A_{vu}$), and $\text{deg}(u)=\sum_v A_{uv}$ denotes node degree.
The label-level transition probability $T_{ij}$ represents the probability of transitioning from any node with label $i$ to any node with label $j$, aggregated from node-level transitions where $v_{\text{next}}$ denotes the next node in the random walk.

\subsection{Random Walk Framework}
\label{appendix:theory:random_walk}

Consider a graph with $c$ distinct labels where each node $u$ has a label $l(u) \in \{1,\ldots,c\}$.
Our analysis bridges node-level and label-level random walks, extending the concept introduced in Section~\ref{sec:theory} of the main text.
While traditional random walks operate at the node level with transition probability $P_{uv} = A_{uv}/\text{deg}(u)$, we derive our label-level transition matrix by aggregating node-level transitions between label classes. This aggregation preserves the Markov property while allowing us to analyze label dynamics directly.

The relationship between node-level and label-level random walks can be formalized as follows. For any node $u$ with label $i$, the probability of transitioning to any node with label $j$ is:

\begin{equation}
\label{eq:node_to_label}
P(l(v_{\text{next}}) = j | u) = \frac{\sum_{v:l(v)=j}A_{uv}}{\text{deg}(u)}
\end{equation}

The label-level transition probability $T_{ij}$ is then obtained by averaging over all nodes with label $i$, weighted by their degrees:

\begin{equation}
\label{eq:level_connection}
T_{ij} = \frac{\sum_{u:l(u)=i}\text{deg}(u)P(l(v_{\text{next}}) = j | u)}{\sum_{u:l(u)=i}\text{deg}(u)}
\end{equation}

The degree-weighted averaging in Equation~\eqref{eq:level_connection} is crucial for preserving the random walk dynamics. Consider two nodes $u_1, u_2$ with the same label $i$ but different degrees $\text{deg}(u_1) \gg \text{deg}(u_2)$. A random walker is more likely to visit $u_1$ than $u_2$ proportional to their degrees. Therefore, when aggregating to the label level, $u_1$'s transition patterns should have proportionally more influence on $T_{ij}$. Mathematically:

\begin{equation}
\label{eq:degree_weight_explanation}
\begin{aligned}
T_{ij} &= \frac{\sum_{u:l(u)=i}\text{deg}(u)P(l(v_{\text{next}}) = j | u)}{\sum_{u:l(u)=i}\text{deg}(u)} \\
&= \sum_{u:l(u)=i} \frac{\text{deg}(u)}{\sum_{v:l(v)=i}\text{deg}(v)} P(l(v_{\text{next}}) = j | u)
\end{aligned}
\end{equation}

The second form explicitly shows that each node's contribution is weighted by its relative degree within its label class. This weighting ensures that the label-level transition probabilities accurately reflect the underlying node-level random walk dynamics, particularly in heterogeneous networks where node degrees vary significantly.

This aggregation preserves the essential properties needed for our theoretical analysis.

\subsection{Basic Properties}
\label{appendix:theory:basic_properties}

We establish key properties of the transition matrix $T$:

\begin{lemma}[Basic Properties]
\label{lemma:transition}
For a connected graph where each node has a self-loop (this assumption simplifies ensuring $T_{ii}>0$, which directly leads to aperiodicity for the label-level Markov chain $T$), the label transition matrix $T$ satisfies three key properties:
\begin{equation}
\begin{aligned}
&\text{(1) Row stochasticity: } \sum_j T_{ij} = 1 \text{ and } T_{ij} \geq 0 \text{ for all } i,j \\
&\text{(2) Irreducibility: } \exists k \text{ such that } (T^k)_{ij} > 0 \text{ for all } i,j \\
&\text{(3) Aperiodicity: } \gcd\{k: (T^k)_{ii} > 0\} = 1 \text{ for all } i
\end{aligned}
\end{equation}
\end{lemma}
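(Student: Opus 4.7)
The plan is to verify each of the three properties by unpacking the definition of $T$ given in Equation~\eqref{eq:level_connection} and leveraging two structural assumptions: the graph is connected and every node has a self-loop. I would prove them in the order stochasticity, aperiodicity, irreducibility, since the first two are essentially bookkeeping and the third is where the real work lies.

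For row stochasticity, I would observe that $T_{ij}$ is a convex combination of the node-level probabilities $P(l(v_{\text{next}})=j \mid u)$, each of which lies in $[0,1]$, so non-negativity is immediate. Summing $T_{ij}$ over $j$ and interchanging the finite sums gives $\sum_j T_{ij} = \frac{\sum_{u:l(u)=i}\text{deg}(u)\sum_j P(l(v_{\text{next}})=j\mid u)}{\sum_{u:l(u)=i}\text{deg}(u)}$. Since the inner sum telescopes to $1$ because $\{l(v_{\text{next}})=j\}_{j=1}^c$ partitions all possible next nodes, the quotient is $1$.

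For aperiodicity, I would use the self-loop assumption directly. Any node $u$ with label $i$ satisfies $A_{uu}=1$, so $P(l(v_{\text{next}})=i\mid u)\geq 1/\text{deg}(u) > 0$. Substituting this into Equation~\eqref{eq:level_connection} yields $T_{ii} > 0$ for every label $i$. Hence $1 \in \{k : (T^k)_{ii} > 0\}$, forcing the gcd of that set to equal $1$.

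The main obstacle is irreducibility, because the irreducibility we need lives at the label level, whereas the connectivity hypothesis lives at the node level. My approach would be to fix two labels $i,j$, pick a node $u$ with $l(u)=i$ and a node $w$ with $l(w)=j$ (these exist because both label classes appear in $V$), and invoke graph connectivity to obtain a path $u=v_0,v_1,\ldots,v_m=w$ of some length $m=m(i,j)$. Along this path each single-step node transition has strictly positive probability $A_{v_{t}v_{t+1}}/\text{deg}(v_t)>0$, and by degree-weighted aggregation this lifts to a strictly positive contribution to $(T^m)_{ij}$ through the product expansion $(T^m)_{ij}=\sum_{i_1,\ldots,i_{m-1}} T_{i i_1}T_{i_1 i_2}\cdots T_{i_{m-1} j}$, where the label sequence $l(v_0),l(v_1),\ldots,l(v_m)$ certifies one positive summand. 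This gives a label-pair-specific exponent $m(i,j)$; to obtain a single $k$ that works for all pairs simultaneously, I would combine this with the aperiodicity fact $T_{ii}>0$: by padding any walk of length $m(i,j)$ with self-label transitions I can extend it to any larger length, so taking $k=\max_{i,j} m(i,j)$ yields $(T^k)_{ij}>0$ uniformly. The subtle point to flag is that self-label transitions require $T_{ii}>0$, which is exactly what the self-loop assumption guarantees, making the two assumptions of Lemma~\ref{lemma:transition} work hand in hand rather than independently.
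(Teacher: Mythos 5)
Your proposal is correct and follows essentially the same route as the paper's proof: row stochasticity by interchanging sums, aperiodicity from the self-loop giving $T_{ii}>0$, and irreducibility by lifting a node-level path guaranteed by connectivity to a positive product of label-level transitions. The one point where you are more careful than the paper is the uniformity of the exponent $k$ across all label pairs (which the lemma's statement, read literally, requires); your padding argument using $T_{ii}>0$ closes a small gap that the paper's proof leaves implicit.
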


\begin{proof}
Row stochasticity can be proven by direct calculation:
\begin{align*}
\sum_j T_{ij} &= \sum_j \frac{\sum_{u:l(u)=i}\text{deg}(u)P(l(v_{\text{next}}) = j | u)}{\sum_{u:l(u)=i}\text{deg}(u)} \\
&= \frac{\sum_{u:l(u)=i}\text{deg}(u)\sum_j P(l(v_{\text{next}}) = j | u)}{\sum_{u:l(u)=i}\text{deg}(u)} \\
&= \frac{\sum_{u:l(u)=i}\text{deg}(u) \cdot 1}{\sum_{u:l(u)=i}\text{deg}(u)} = 1
\end{align*}

For irreducibility, we show that for any labels $i,j$, there exists a sequence of labels $l_1,\ldots,l_k$ such that:
\begin{equation}
T_{il_1}T_{l_1l_2}\cdots T_{l_kj} > 0
\end{equation}
This follows from graph connectivity: for any nodes $u,v$ with labels $i,j$ respectively, there exists a path $u=v_0,v_1,\ldots,v_m=v$ where $A_{v_kv_{k+1}}>0$. Let $l_k=l(v_k)$. Then:
\begin{equation}
\begin{aligned}
T_{l_k l_{k+1}} &= \frac{\sum_{u:l(u)=l_k}\text{deg}(u)\,P\bigl(l(v_{\text{next}})=l_{k+1}\mid u\bigr)}{\sum_{u:l(u)=l_k}\text{deg}(u)} \\[6pt]
&= \frac{\sum_{u:l(u)=l_k}\text{deg}(u)\,\frac{\sum_{w:l(w)=l_{k+1}}A_{u w}}{\text{deg}(u)}}{\sum_{u:l(u)=l_k}\text{deg}(u)} \\[6pt]
&= \frac{\sum_{u:l(u)=l_k}\sum_{w:l(w)=l_{k+1}}A_{u w}}{\sum_{u:l(u)=l_k}\text{deg}(u)} \\[6pt]
&\geq \frac{A_{v_k v_{k+1}}}{\sum_{u:l(u)=l_k}\text{deg}(u)} \\[6pt]
&> 0
\end{aligned}
\end{equation}

For aperiodicity, self-loops ensure $(T^1)_{ii}>0$. Moreover, for any $k\geq 1$, we can construct a path of length $k$ from $i$ to itself by inserting self-loops, proving $\gcd\{k: (T^k)_{ii} > 0\} = 1$.
\end{proof}

\subsection{Proof of Label Distribution Convergence}
\label{appendix:theory:proof_convergence}

The preservation of Markov chain properties through our label-level aggregation allows us to apply classical results:

\begin{proof}[Proof of Theorem~\ref{thm:convergence}]
By Lemma~\ref{lemma:transition}, $T$ defines an irreducible and aperiodic Markov chain. The Perron-Frobenius theorem~\cite{levin2017markov} ensures that:

A unique stationary distribution $\pi$ exists satisfying $\pi T = \pi$. The eigenvalues of $T$ satisfy $1 = \lambda_1 > |\lambda_2| \geq \cdots \geq |\lambda_c|$. For any initial distribution $\mu_0$, $\lim_{k \to \infty} \mu_0 T^k = \pi$.

For convergence rate, we use spectral decomposition. Let $v_1,\ldots,v_c$ be the right eigenvectors of $T$. Any initial distribution $\mu_0$ can be written as:
\begin{equation}
\mu_0 = \pi + \sum_{j=2}^c c_j v_j
\end{equation}
where $c_j$ are coefficients. After $k$ steps:
\begin{equation}
\mu_0 T^k = \pi + \sum_{j=2}^c c_j \lambda_j^k v_j
\end{equation}

By the Convergence Rate Theorem for finite irreducible aperiodic Markov chains~\cite{levin2017markov}, we have the convergence bound:
\begin{equation}
\|\mu_0 T^k - \pi\|_1 \leq C|\lambda_2|^k
\end{equation}
where $C$ is a constant depending on the initial distribution $\mu_0$ and the stationary distribution $\pi$.

For the stationary distribution, we verify directly:
\begin{equation}
\begin{aligned}
(\pi T)_j &= \sum_i \pi_i T_{ij} \\
&= \sum_i \frac{\sum_{u:l(u)=i}\text{deg}(u)}{\sum_w \text{deg}(w)} \cdot \\ 
& \quad\quad \frac{\sum_{u:l(u)=i}\text{deg}(u)P(l(v_{\text{next}}) = j | u)}{\sum_{u:l(u)=i}\text{deg}(u)} \\
&= \sum_i \frac{\sum_{u:l(u)=i}\text{deg}(u)}{\sum_w \text{deg}(w)} \cdot \frac{\sum_{u:l(u)=i}\text{deg}(u)\frac{\sum_{v:l(v)=j}A_{uv}}{\text{deg}(u)}}{\sum_{u:l(u)=i}\text{deg}(u)} \\
&= \sum_i \frac{\sum_{u:l(u)=i}\text{deg}(u)}{\sum_w \text{deg}(w)} \cdot \frac{\sum_{u:l(u)=i}\sum_{v:l(v)=j}A_{uv}}{\sum_{u:l(u)=i}\text{deg}(u)} \\
&= \frac{1}{\sum_w \text{deg}(w)} \sum_i \sum_{u:l(u)=i}\sum_{v:l(v)=j}A_{uv} \\
&= \frac{1}{\sum_w \text{deg}(w)} \sum_u\sum_{v:l(v)=j}A_{uv} \\
&= \frac{1}{\sum_w \text{deg}(w)} \sum_{v:l(v)=j}\sum_u A_{uv} \\
&= \frac{1}{\sum_w \text{deg}(w)} \sum_{v:l(v)=j}\text{deg}(v) \quad \text{(\small since $\sum_u A_{uv} = \text{deg}(v)$)} \\
&= \frac{\sum_{v:l(v)=j}\text{deg}(v)}{\sum_w \text{deg}(w)} = \pi_j
\end{aligned}
\end{equation}

Notably, this stationary distribution $\pi_j = \frac{\sum_{u:l(u)=j}\text{deg}(u)}{\sum_v \text{deg}(v)}$ has a clear physical interpretation: it represents the proportion of edges connected to nodes with label $j$ in the graph. This explains why $\pi_j$ serves as the natural limit for label transition probabilities, as it reflects the structural prominence of label $j$ in terms of edge connections.
\end{proof}

\subsection{Proof of Label Consistency Decay}
\label{appendix:theory:proof_decay}

\begin{proof}[Proof of Corollary~\ref{cor:lc_decay}]
The result follows from Theorem~\ref{thm:convergence}. For $k=0$, $\mathit{LC}_{\text{prob}}(0)=1$ by definition since we start at a node with label $i$. As $k \to \infty$, $\mathit{LC}_{\text{prob}}(k) = p_k(i|i) \to \pi_i$ by the theorem.

To prove $\pi_i < 1$, note that in a connected graph, there must exist nodes with labels other than $i$ and their degrees sum to a positive value (due to connectivity). Therefore:
\begin{equation}
\small
\sum_v \text{deg}(v) =\sum_{u:l(u)=i}\text{deg}(u) + \sum_{u:l(u)\neq i}\text{deg}(u) > \sum_{u:l(u)=i}\text{deg}(u)
\end{equation}
This implies:
\begin{equation}
\pi_i = \frac{\sum_{u:l(u)=i}\text{deg}(u)}{\sum_v \text{deg}(v)} < 1
\end{equation}

The exponential decay follows from the spectral decomposition:
\begin{equation}
\mathit{LC}_{\text{prob}}(k) = (T^k)_{ii} = \pi_i + \sum_{j=2}^c c_j \lambda_j^k v_{ji}
\end{equation}
where $|\lambda_j| < 1$ for $j\geq 2$.
\end{proof}

\subsection{Contrasting Decay Mechanisms in Homophily and Heterophily Graphs}
\label{appendix:theory:contrasting}

Our theoretical analysis in Appendix~\ref{appendix:theory:proof_convergence} and~\ref{appendix:theory:proof_decay} proves the asymptotic decay of label consistency. Here we provide an intuitive interpretation of the observed decay patterns in real-world graphs through a simplified lens, though a complete theoretical characterization remains an open challenge.

\subsubsection{Idealized assumptions for two-label analysis} 
The following analysis of two-label graphs relies on two key assumptions to isolate the core mechanism:
\begin{itemize}
    \item Label symmetry: Both labels have equal size (number of nodes) and degree distribution
    \item Uniform transition: All nodes within a label class share identical probabilities of connecting to other labels
\end{itemize}
These assumptions simplify the transition matrix to a symmetric form, enabling closed-form eigenvalue solutions. Real-world graphs may deviate from this idealized scenario due to label imbalance or heterogeneous connectivity.

\subsubsection{Homophily graph analysis} 
For homophily graphs (8 instances visualized in Figure~\ref{fig:motivation} left), $\mathit{LC}_{\text{emp}}(k)$ shows smooth decay patterns. To understand this behavior, consider the structure of transition matrix $T$ in homophily graphs. From Equation~\eqref{eq:level_connection}:
\begin{equation}
\label{eq:homophily_tii}
\begin{aligned}
T_{ii} &= \frac{\sum_{u:l(u)=i}\text{deg}(u)P(l(v_{\text{next}}) = i | u)}{\sum_{u:l(u)=i}\text{deg}(u)} \\
&= \frac{\sum_{u:l(u)=i}\text{deg}(u)\frac{\sum_{v:l(v)=i}A_{uv}}{\text{deg}(u)}}{\sum_{u:l(u)=i}\text{deg}(u)} \\
&= \frac{\sum_{u:l(u)=i}\sum_{v:l(v)=i}A_{uv}}{\sum_{u:l(u)=i}\text{deg}(u)} \gg T_{ij} \quad \text{for } j \neq i
\end{aligned}
\end{equation}

\subsubsection{Transition matrix simplification} 
Under the idealized assumptions, the dominant self-transition probabilities lead to transition matrices of the form when examining the simplest two-label case:
\begin{equation}
T \approx \begin{bmatrix} 
p & 1-p \\
1-p & p
\end{bmatrix} \quad \text{where} \quad p \gg 0.5
\end{equation}

By solving the characteristic equation $\det(T - \lambda I) = 0$, such matrices have eigenvalues $\lambda_1 = 1$ and $\lambda_2 = 2p-1 > 0$. When applied to the spectral decomposition from Theorem~\ref{thm:convergence}:
\begin{equation}
\label{eq:empirical_spectral}
\mathit{LC}_{\text{prob}}(k) = (T^k)_{ii} = \pi_i + \sum_{j=2}^c c_j \lambda_j^k v_{ji}
\end{equation}
the positive second eigenvalue leads to smooth exponential decay, as $(\lambda_2)^k$ remains positive throughout the decay process.

\subsubsection{Heterophily graph analysis} 
In contrast, heterophily graphs (3 instances visualized in Figure~\ref{fig:motivation} right) exhibit markedly different behavior: sharp initial decay followed by oscillatory patterns while maintaining an overall decreasing trend. This behavior stems from dominant cross-label transitions:
\begin{equation}
\label{eq:heterophily_tij}
T_{ij} = \frac{\sum_{u:l(u)=i}\sum_{v:l(v)=j}A_{uv}}{\sum_{u:l(u)=i}\text{deg}(u)} \gg T_{ii} \quad \text{for } j \neq i
\end{equation}

Under the same two-label simplification, this creates transition matrices approximating:
\begin{equation}
T \approx \begin{bmatrix}
1-p & p \\
p & 1-p
\end{bmatrix} \quad \text{where} \quad p \gg 0.5
\end{equation}

These matrices have eigenvalues $\lambda_1 = 1$ and $\lambda_2 = 1-2p < 0$. When substituted into the spectral decomposition, the negative second eigenvalue causes oscillation in the decay process, as $(\lambda_2)^k$ alternates between positive and negative values with increasing $k$, while still maintaining the overall decay due to $|\lambda_2| < 1$.

\subsubsection{Limitations and extensions} 
While this analysis explains the oscillatory vs. smooth decay dichotomy under idealized conditions, extending it to graphs with label imbalance (e.g., 90\%-10\% label distribution) or heterogeneous connectivity (e.g., high-degree hubs) requires more complex spectral analysis beyond the scope of this work. The core eigenvalue mechanism ($\lambda_2$ sign determining decay pattern) persists in general cases, but the simplified closed-form solutions only hold under the stated assumptions.

\subsubsection{Open challenges and summary} 
While this simplified analysis provides intuitive insights into the observed decay patterns, a complete theoretical characterization accounting for multiple labels, arbitrary graph structures, and varying degrees of homophily/heterophily remains an open challenge. The key finding is that despite their different local behaviors, both graph types consistently exhibit decay in label consistency, as rigorously proven in Appendix~\ref{appendix:theory:proof_convergence} and~\ref{appendix:theory:proof_decay}.
\section{Detailed Method Analysis}
\label{appendix:method_analysis}

\subsection{Comparison with Previous Methods}

Table~\ref{tab:method_comparison} presents a systematic comparison between our framework and representative GCL methods. We analyze this comparison from three key aspects: data augmentation, view generation, and similarity modeling.

Early GCL methods like GRACE~\cite{Zhu2020DeepGC} and GCA~\cite{Zhu2021GraphCL} follow the traditional contrastive learning~\cite{qiao2022rpt} paradigm from computer vision, employing data augmentation to create multiple views of the same graph and enforcing absolute similarity between corresponding nodes across views. While these methods have shown promising results, they face fundamental challenges when applied to graphs.
The first major challenge lies in view generation validity. Unlike images where operations such as rotation preserve semantic meaning, graph augmentations can significantly alter the underlying structure and properties of the graph. This issue becomes particularly pronounced when dealing with graphs representing molecular structures or social relationships, where each edge carries crucial semantic information.
Another significant challenge concerns similarity specification in graphs. Determining which node pairs should be considered similar proves far more complex than in visual data. While humans can readily verify if two image views represent the same object, judging similarity between graph views often exceeds human intuition.

Later approaches like SUGRL~\cite{Mo2022SimpleUG} and AFGRL~\cite{Lee2022AugmentationFreeSL} attempted to address the view generation challenge by eliminating the need for data augmentation. However, they still maintain the core assumption of specifying absolute similarity between certain node pairs, inheriting the similarity specification challenge.

Our framework takes a fundamentally different approach by leveraging the natural similarity patterns encoded in graph structure~\cite{qiao2024dual,dong2023adaptive,dong2024temporal}. Instead of artificially constructing similar pairs, we preserve the relative similarity relationships that emerge from the graph's inherent structural-semantic properties. This paradigm shift offers several key advantages.
By working with the graph's inherent structure rather than artificial views, our approach naturally avoids the challenge of generating valid augmentations. The relative similarity framework acknowledges the noisy nature of structural relationships, enabling more robust representation learning~\cite{ning2024fedgcs}. Furthermore, this approach naturally handles both homophily and heterophily graphs by focusing on relative similarity patterns that transcend local connectivity preferences. Most importantly, our method is grounded in theoretical understanding of how label consistency decays with structural distance, providing a principled basis for representation learning.

\begin{table}[!t]
\scriptsize
\setlength\tabcolsep{0.75pt}
\centering
\begin{tabular}{lccc}
\toprule
Method & Data Augmentation & View Generation & Absolute Similarity \\
\midrule
GRACE~\cite{Zhu2020DeepGC} & \checkmark & \checkmark & \checkmark \\
GCA~\cite{Zhu2021GraphCL} & \checkmark & \checkmark & \checkmark \\
BGRL~\cite{thakoor2021large} & \checkmark & \checkmark & \checkmark \\
SUGRL~\cite{Mo2022SimpleUG} & $\times$ & \checkmark & \checkmark \\
AFGRL~\cite{Lee2022AugmentationFreeSL} & $\times$ & \checkmark & \checkmark \\
\midrule
Ours & $\times$ & $\times$ & $\times$ \\
\bottomrule
\end{tabular}
\caption{Comparison of key components between existing GCL methods and our approach.}
\label{tab:method_comparison}
\end{table}

\subsection{Detailed Complexity Analysis}

For a graph $\mathcal{G}$ with $N$ nodes and average degree $d$, we analyze the computational complexity in terms of similarity computations $\theta(\cdot,\cdot)$, as this is the dominant operation in contrastive learning frameworks.
The average $k$-th neighborhood size $|\mathcal{N}(v_i)^{[k]}|=k^d$. 
Next, we introduce how to calculate the complexity of \mymodel{}\textsubscript{\textsc{Pair}} and \mymodel{}\textsubscript{\textsc{List}}, respectively.

\subsubsection{Pairwise Relative Similarity}
~\label{appendix: pair-wise}
In Equation~\ref{equation:pair-wise}, for each anchor node, we will calculate the corresponding ``outer loop'', and each ``outer loop'' includes several ``inner loop'' as:

\begin{flalign}
    \vspace{-4mm}
    \scalemath{0.65}{
        \mathcal{L} = - \sum\limits_{\mathbf{v}_i\in \mathcal{V}} \frac{1}{k} \sum\limits_{j=1}^{k} \underbrace{ \sum\limits_{m=1}^{k-j+1} \log \underbrace{\left[ \min\{\frac{\sum\limits_{\mathbf{h}_\ast \in \mathbb{H}^{[j]}_i}\exp(\frac{s(\mathbf{h}_i, \mathbf{h}_\ast)}{\tau})}{\sum\limits_{\mathbf{h}_\ast \in \mathbb{H}^{[j]}_i}\exp(\frac{s(\mathbf{h}_i, \mathbf{h}_\ast)}{\tau}) + \sum\limits_{\mathbf{h}_\diamond \in \mathbb{H}^{[j+m]}_i}\exp(\frac{s(\mathbf{h}_i, \mathbf{h}_\diamond)}{\tau})}, \alpha \}\right]}_\text{inner loop}}_\text{outer loop}.}
    \label{equation:pair-wise ranking infoNCE analysis}
    \vspace{-4mm}
    \end{flalign}
    
For the $j$-th neighborhood node of the anchor node $v_i$, it includes $k-j+1=k$ rounds of ``inner loop'' for relative similarity modeling. 
For each ``inner loop'', it includes $d^{j}$ times of $s(\mathbf{h}_i, \mathbf{h}_\ast)$, and the total number for one ``outer loop'' is $d^j+d^{j+1}+\cdots+d^{k+1}$. 
Then, the number of computation of $s(\cdot)$ can be listed as 

\begin{equation}
    \small
    \vspace{-2mm}
    \begin{aligned}
        \overbrace{\qquad\qquad\qquad\qquad\qquad \qquad\qquad\qquad\qquad\qquad\qquad}^{ inner \enspace loop} \\
    \makecell[c]{outer \\loop}\left\{
    \begin{aligned}
        & 1-th\enspace hop \quad d + d^2 \quad d+d^3 \cdots \quad d + d^k \quad d + d^{k+1} \\
        & 2-th\enspace hop \quad d^2+d^3 \quad \cdots \quad d^2 + d^k \quad d^2 + d^{k+1} \\
        & \qquad\vdots \qquad\qquad\qquad \begin{turn}{90}
            $\ddots$
        \end{turn} \\
        & k-th\enspace hop \quad d^k + d^{k+1} \\
    \end{aligned}
    \right.
    \end{aligned}
\label{equation:appendix-complex-pairwise}
\end{equation}

\vspace{2mm}
Then, for the anchor node $v_i$, we need 
\begin{align}
\vspace{-2mm}
\small
\begin{split}
     &k\cdot d + k\cdot d^2 + \cdots + k\cdot d^k + k\cdot d^{k+1} \\ 
   =& k\cdot(d + d^2 + \cdots + d^k + d^{k+1}) \\
   =& k\sum \limits_{i=1}^{k+1} d^i  .
\end{split}
\end{align}

Since $\mathcal{G}$ contains $N$ nodes, the total computation times of $s(\cdot)$ is $N \cdot k\sum \limits_{i=1}^{k+1} d^i$.
    

\subsubsection{Listwise Relative Similarity}
~\label{appendix: list-wise}
In Equation~\ref{equation:list-wise}, for each anchor node, we only need to calculate the ``outer loop'', because no ``inner loop'' is in \mymodel{}\textsubscript{\textsc{List}}.
\begin{equation}
   \small
    \mathcal{L} = - \sum\limits_{\mathbf{v}_i\in \mathcal{V}} \frac{1}{k} \sum\limits_{j=1}^{k} \log \underbrace{\left[ \min \{ \frac{\sum\limits_{\mathbf{h}_\ast \in \mathbb{H}^{[j]}_i}\exp(\frac{s(\mathbf{h}_i, \mathbf{h}_\ast)}{\tau})}{\sum\limits_{j'=j}^{k+1} \sum\limits_{\mathbf{h}_\diamond \in \mathbb{H}^{[j']}_i} \exp(\frac{s(\mathbf{h}_i, \mathbf{h}_{\diamond})}{\tau}) },  \alpha \}\right]}_{outer\enspace loop},
    \label{equation:list-wise ranking infoNCE analysis}
\end{equation} 
Therefore, for $j$-th hop neighborhood of the anchor node $v_i$, the ``outer loop'' includes $d^j+d^{j+1}+\cdots+d^{k}+d^{k+1}$ times of $s(\cdot)$. 
Then, the number of computation of $s(\cdot)$ can be listed as 

\begin{equation}
    \small
    \makecell[c]{outer \\loop}\left\{
    \begin{aligned}
        & 1-th\enspace hop \quad d + d^2 + d^3 + \cdots + d^k + d^{k+1}\\
        & 2-th\enspace hop \quad d^2 + d^3 + \cdots + d^k + d^{k+1}\\
        & \qquad\vdots \qquad\qquad\qquad \begin{turn}{90}
            $\ddots$
        \end{turn} \\
        & k-th\enspace hop \quad d^k + d^{k+1}\\
    \end{aligned}
    \right.
    \label{equation:appendix-complex-listwise}
\end{equation} 


Then, for the anchor node $v_i$. we need 

\begin{align}
\small
\begin{split}
    &d+2\cdot d^2 + 3\cdot d^3 + \cdots + k\cdot d^k +d^{k+1} \\
    =& \sum \limits_{i=1}^{k} i\cdot d^i + k\cdot d^{k+1}.
\end{split}   
\end{align}
Since $\mathcal{G}$ contains $N$ nodes, the total computation times of $s(\cdot)$ is $N \cdot (\sum \limits_{i=1}^{k} i\cdot d^i + k\cdot d^{k+1})$.

\subsubsection{Summary}

Traditional GCL methods typically compute similarity between one anchor node and $Q$ samples, resulting in $\mathcal{O}(NQ)$ computations per epoch.
For our relative similarity framework with $k$-hop neighborhoods, the pairwise implementation requires $N\cdot k\sum_{i=1}^{k+1}d^i$ computations, while the listwise implementation needs $N(\sum_{i=1}^{k}i\cdot d^i+k\cdot d^{k+1})$ computations. By caching similarity scores to avoid redundant computations, both achieve an actual complexity of $\mathcal{O}(Nd(1+d+d^2))$. For large-scale graphs, we can sample a fixed number $d'$ ($d' \ll d$) of nodes from each hop, reducing the complexity to $\mathcal{O}(Nd'(1+d'+d'^2))$. This makes our framework computationally comparable to traditional GCL methods while maintaining its ability to capture natural similarity relationships.
\section{Detailed Experimental Setting}
\label{appendix:experimental-setting}

\begin{table*}[!t]
\centering
\small
\setlength{\tabcolsep}{1.00mm}{
\begin{tabular}{@{}lcrrrrrrrrrr@{}}
\toprule
\textbf{Dataset} & \textbf{Type} &  $\mathit{HM}$  & \textbf{\#Nodes} & \textbf{\#Edges} & \textbf{\#Features} & \textbf{\#Classes} & \textbf{\makecell[c]{\#1-hop \\ {\scriptsize neighbors}}} & \textbf{\makecell[c]{\#2-hop \\ {\scriptsize neighbors}}} & \textbf{\makecell[c]{\#3-hop \\ {\scriptsize neighbors}}} & \textbf{\makecell[c]{\#4-hop \\ {\scriptsize neighbors}}} & \textbf{\makecell[c]{\#5-hop \\ {\scriptsize neighbors}}} \\  \midrule
Cora                 & Homophiliy   &  0.8252  &       $2,708$           &       $10,556$           &        $1,433$             &         7           &  $3.90$   & $31.9$   &  $91.3$   & 244.9  &  438.4   \\
Citeseer                 &  Homophiliy  &  0.7062  &     $3,327$            &       $9,228$           &         $3,703$            &        6            &  $2.7$   &  $11.4$  &  $28.4$   & 52.7   &  78.1    \\
Pubmed                &  Homophiliy  & 0.7924   &      $19,717$          &       $88,651$           &         500            &            3        &  $4.5$   & $54.6$   &  334.6    & 1596.7  &  3321.4  \\
WikiCS                 &  Homophiliy  &  0.6409  &    $11,701$           &        $216,123$          &        $300$             &        10            &  $24.8$  & $1051.4$ &  $4701.8$ & 3199.0 &  647.9  \\
Computers                 &  Homophiliy  & 0.7853   &    $13,752$             &      $245,861$            &       $767$              &        10            &  $35.8$  & $1812.0$ &  $5644.3$ & 4332.6 &  1047.8  \\
Photo                 & Homophiliy  &  0.8365  &    $7,650$           &        $119,081$          &          $745$           &          8          &  $31.1$  & $770.7$  &  $1716.5$ & 2162.3 &  1729.2 \\
Co.CS                 & Homophiliy  &  0.8320  &    $18,333$           &       $81,894$           &          $6,805$           &       15             &  $8.9$   & $98.7$   &  $765.0$  & 3225.5 &  6158.7  \\
ogbn-arxiv                 & Homophiliy  &  0.5532  &    $169,343$           &       $1,166,243$           &          $128$           &       40             &  $7.0$   & $2038.3$   &  $8234.6$  & 16160.1 &  19269.8  \\
Chameleon                 &  Heterophily  &  0.2425  &    $2,277$           &       $36,101$           &          $2,325$           &       5             &  $27.6$   & $531.1$   &  $507.7$  & 761.6 &  298.7 \\
Squirrel                 &  Heterophily  &  0.2175  &    $5,201$           &       $217,073$           &          $2,089$           &       5             &  $76.3$   & $1615.8$   &  $1846.5$  & 1171.4 &  413.1  \\
Actor                 &  Heterophily  &  0.1848  &    $7,600$           &       $33,544$           &          $931$           &       5             &  $3.9$   & $96.7$   &  $482.1$  & 1444.4 &  1651.5 \\
\bottomrule
\end{tabular}}
\caption{The characteristics and statistics of datasets, where $\mathit{HM}=\mathit{LC}_{\text{emp}}(1)$ represents the homophily metric, and ``\#$n$-hop neighbors" represents the average number of $n$-th hop neighbors.}
\label{tab:datasets_statistics}
\end{table*}

\subsection{Datasets}
\label{appendix:experimental-setting:datasets}
To comprehensively validate the universality of our approach, we conduct extensive experiments on 11 real-world benchmark datasets spanning diverse domains (citation networks, social networks, e-commerce), varying scales (from 2K to 169K nodes), and different structural properties (8 homophily graphs and 3 heterophily graphs). These datasets include Cora, Citeseer, Pubmed, WikiCS, Amazon-Computers, Amazon-Photo, Coauthor-CS, and ogbn-arxiv (homophily), along with Chameleon, Squirrel, and Actor (heterophily). The characteristics and statistics of these datasets are summarized in Table \ref{tab:datasets_statistics}. Here we provide detailed descriptions of each dataset category:

\begin{itemize}[left=0pt]
\item \textbf{Citation Networks} (Cora, CiteSeer, PubMed, and ogbn-arxiv): Nodes represent academic papers and edges denote citation relationships~\cite{sen2008collective,hu2020open}. Each node contains bag-of-words features derived from paper content, with labels indicating the paper's research area. We use their public train/valid/test splits.

\item \textbf{Co-occurrence Networks} (WikiCS, Amazon-Computers, Amazon-Photo, and Coauthor-CS): These graphs capture various forms of co-occurrence relationships. WikiCS connects computer science articles through hyperlinks, with node features being the average of GloVe word embeddings~\cite{pennington2014glove}. Amazon datasets~\cite{mcauley2015image} link co-purchased products, while Coauthor-CS derived from MAG~\cite{sinha2015overview} connects authors through co-authorship. For Amazon and Coauthor datasets without standard splits, we randomly partition nodes into train/valid/test sets (10\%/10\%/80\%). For WikiCS, we use its 20 provided public splits.

\item \textbf{Heterophily Graphs} (Chameleon, Squirrel, and Actor): These datasets naturally exhibit heterophily characteristics. Chameleon and Squirrel are Wikipedia page networks where edges represent mutual links~\cite{rozemberczki2021multi}, while Actor is derived from a film-industry network where edges indicate co-occurrence on Wikipedia pages~\cite{Pei2020GeomGCNGG}. We use the public splits provided by Geom-GCN~\cite{Pei2020GeomGCNGG}.
\end{itemize}

As shown in Table \ref{tab:datasets_statistics} and our analysis in Section~\ref{sec:empirical}, the number of semantically similar neighbors (those sharing the same label) becomes particularly small beyond 4 hops, while the total number of neighbors either decreases or grows at a diminishing rate after the 5th hop. Therefore, in our experiments, we set the neighborhood range $k$ (in Equations~\ref{equation:pair-wise} and \ref{equation:list-wise}) to 4, focusing on modeling relative similarity patterns within meaningful structural distances.

\subsection{Baselines}
\label{appendix:experimental-setting:baselines}
To rigorously evaluate the effectiveness of \mymodel{}, we conduct comprehensive comparisons against 20 representative baselines covering the full spectrum of graph representation learning and GCL approaches: 
including 2 supervised methods ({\it i.e.}, GCN~\cite{welling2016semi} and GAT~\cite{Velickovic2018GraphAN}), 
3 autoencoder based methods ({\it i.e.}, GAE, VGAE~\cite{Kipf2016VariationalGA} and GraphMAE~\cite{Hou2022GraphMAESM}),
9 augmentation-based GCL methods ({\it i.e.}, DGI~\cite{Velickovic2019DeepGI}, MVGRL~\cite{hassani2020contrastive}, GRACE~\cite{Zhu2020DeepGC}, GCA~\cite{Zhu2021GraphCL}, BGRL~\cite{thakoor2021large}, G-BT~\cite{bielak2022graph}), CCA-SSG~\cite{Zhang2021FromCC}, gCooL~\cite{li2022graph} and HomoGCL~\cite{li2023homogcl}), 
5 augmentation-free GCL methods ({\it i.e.}, COSTA~\cite{Zhang2022COSTACF}, SUGRL~\cite{Mo2022SimpleUG}, AFGRL~\cite{Lee2022AugmentationFreeSL} and SP-GCL~\cite{wang2023singlepass} and GraphACL~\cite{xiao2024simple}),
and 1 multi-task self-supervised learning based method AUTOSSL~\cite{jin2021automated}.
This diverse set of baselines ensures a thorough evaluation against both classical and state-of-the-art approaches in graph representation learning.

\begin{figure*}[!t]
\centering
\includegraphics[width=1.0\linewidth]{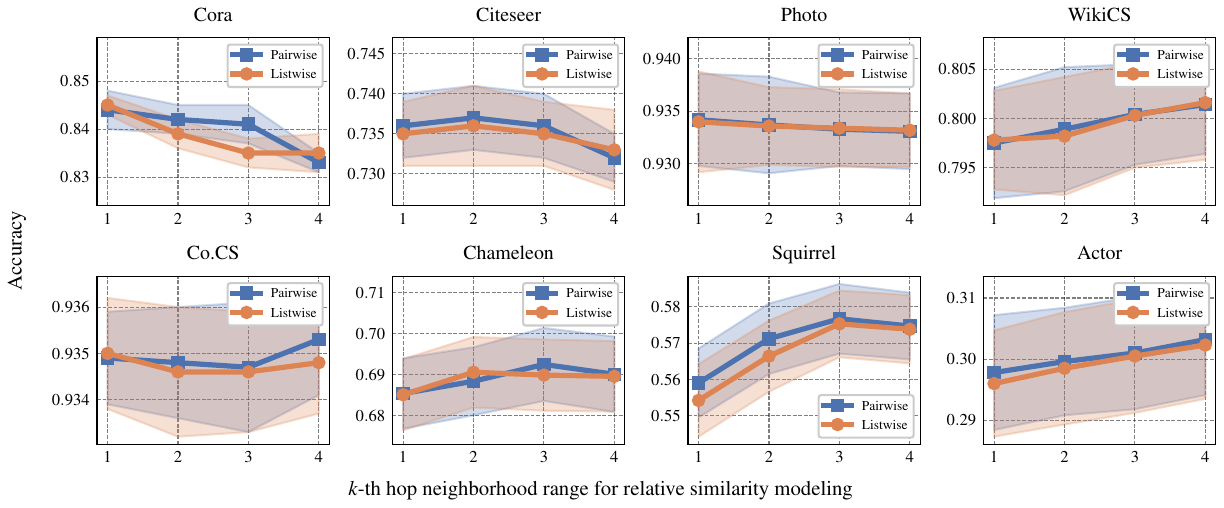}
\caption{Results of \mymodel{} with neighborhood range changing as $k=1, 2, 3, 4$, respectively.}
\label{fig:different_hops}
\end{figure*}

\subsection{Evaluation Protocol}
\label{appendix:experimental-setting:evaluation}
Following standard practice in graph self-supervised learning~\cite{Velickovic2019DeepGI,Zhu2020DeepGC,thakoor2021large}, we evaluate our method using linear evaluation. Specifically, we first train the graph encoder (GCN~\cite{welling2016semi}) on the entire graph in a self-supervised manner using our relative similarity objectives. After training, we freeze the encoder and use it to generate node embeddings. These embeddings are then used to train a logistic regression classifier for node classification, with performance measured by classification accuracy on the test set.

\subsection{Implementation Details}
\label{appendix:experimental-setting:implementation}
\mymodel{} is implemented in Pytorch and all experiments are performed on a NVIDIA V100 GPU with 32 GB memory.
All parameters of the \mymodel{} and logistic regression classifier are optimized by the Adam optimizer~\cite{kingma2014adam}.
For fair evaluation, following previous works~\cite{Zhang2021FromCC,Lee2022AugmentationFreeSL,wang2023singlepass}, we report the average test accuracy with the corresponding standard deviation through 20 random initializations on each dataset. 
A few baselines do not use the public split of Cora, CiteSeer and PubMed, or do not provide results for some of the datasets, we reproduce them based on their codes in the same experimental setup as ours.

We use grid search to find the optimal hyperparameters for each dataset. 
For generic hyperparameters, we find the optimal embedding dimension among $\{128, 256, 512, 1024\}$, the number of GCN layers among $\{1, 2\}$, the activation function of GCN among \{relu, prelu, rrelu\}, the learning rate and weight decay between $[1e^{-8}, 1e^{-2}]$, the temperature $\tau$ among $ \{0.1, 0.2, ..., 0.9\}$, the spacing size of $\tau$ between two adjoining hops among $ \{0, 0.0125, 0.025, 0.05, 0.1 \}$~\cite{hoffmann2022ranking}. 
For hyperparameters specific to \mymodel{}, we conduct grid search over the neighborhood range $k$ among $\{1, 2, 3, 4\}$ and the similarity threshold $\alpha$ in $[0.0001, 1.0]$, which controls the strength of relative similarity constraints as discussed in Section~\ref{sec:learning_framework}.
\section{Additional Results}

\subsection{Impact of Neighborhood Range}
\label{appendix:experiments:neighbor-range}

To investigate the effects of $k$ on the performance of \mymodel{}.
We conduct experiments on Cora, CiteSeer, Photo (\textit{small homophiliy graphs}), WikiCS, Co.CS (\textit{large homophiliy graphs}), Chameleon, Squirrel and Actor (\textit{heterophily graphs}) to investigate the effect of neighborhood range on \mymodel{}\textsubscript{\textsc{Pair}} and \mymodel{}\textsubscript{\textsc{List}} respectively. 
As shown in Figure~\ref{fig:different_hops}, we can observe that the best performance of \mymodel{} is achieved at different $k$-hop neighborhood range due to different characteristics of datasets.

\clearpage
\bibliographystyle{named}
\bibliography{9-ref}

\end{document}